\theoremstyle{plain}
\newtheorem{theorem}{Theorem}[section]
\newtheorem{lemma}[theorem]{Lemma}
\theoremstyle{definition}
\newtheorem{definition}[theorem]{Definition}
\newtheorem{assumption}{Assumption}
\theoremstyle{remark}
\newtheorem{insight}{Conclusion}
\theoremstyle{plain}
\definecolor{solarized@base03}{HTML}{002B36}
\definecolor{solarized@base02}{HTML}{073642}
\definecolor{solarized@base01}{HTML}{586e75}
\definecolor{solarized@base00}{HTML}{657b83}
\definecolor{solarized@base0}{HTML}{839496}
\definecolor{solarized@base1}{HTML}{93a1a1}
\definecolor{solarized@base2}{HTML}{EEE8D5}
\definecolor{solarized@base3}{HTML}{FDF6E3}
\definecolor{solarized@yellow}{HTML}{B58900}
\definecolor{solarized@orange}{HTML}{CB4B16}
\definecolor{solarized@red}{HTML}{DC322F}
\definecolor{solarized@magenta}{HTML}{D33682}
\definecolor{solarized@violet}{HTML}{6C71C4}
\definecolor{solarized@blue}{HTML}{268BD2}
\definecolor{solarized@cyan}{HTML}{2AA198}
\definecolor{solarized@green}{HTML}{859900}
\newtcolorbox{importantresult}{colback=solarized@yellow!5!white,
colframe=solarized@yellow,parbox, left=0.5mm, right=0.5mm,top=0.5mm,bottom=0.5mm}
\newtcolorbox{importantresult_noparbox}{colback=solarized@yellow!5!white,
colframe=solarized@yellow,parbox=false, left=0.5mm, right=0.5mm,top=0.5mm,bottom=0.5mm}
\providecommand\given{\MidSymbol[\vert]}
\newcommand\MidSymbol[1][]{%
\nonscript\:#1
\allowbreak
\nonscript\:
\mathopen{}}
\DeclareMathOperator{\opVar}{Var}
\DeclarePairedDelimiterXPP{\Var}[2]{\opVar_{#1}}{[}{]}{}{%
    \renewcommand\given{\MidSymbol[\delimsize\vert]}
    \ifblank{#2}{\:\cdot\:}{#2}
}
\DeclareMathOperator{\opExpectation}{\mathbb{E}}
\DeclarePairedDelimiterXPP{\implicitE}[1]{\opExpectation}{[}{]}{}{%
    \renewcommand\given{\MidSymbol[\delimsize\vert]}
    \ifblank{#1}{\:\cdot\:}{#1}
}
\DeclarePairedDelimiterXPP{\E}[2]{\opExpectation_{#1}}{[}{]}{}{%
    \renewcommand\given{\MidSymbol[\delimsize\vert]}
    \ifblank{#2}{\:\cdot\:}{#2}
}
\newcommand{\simpleE}[1]{\opExpectation_{#1}} %
\DeclarePairedDelimiterXPP{\indicator}[1]{\mathbb{1}}{\{}{\}}{}{%
    \ifblank{#1}{\:\cdot\:}{#1}
}
\DeclareMathOperator{\opInformationContent}{h}
\DeclarePairedDelimiterXPP{\ICof}[1]{\opInformationContent}{(}{)}{}{%
    \ifblank{#1}{\:\cdot\:}{#1}
}
\DeclareMathOperator{\opEntropy}{H}
\DeclarePairedDelimiterXPP{\Hof}[1]{\opEntropy}{[}{]}{}{%
    \renewcommand\given{\MidSymbol[\delimsize\vert]}
    \ifblank{#1}{\:\cdot\:}{#1}
}
\DeclarePairedDelimiterXPP{\xHof}[1]{\opEntropy}{(}{)}{}{%
    \ifblank{#1}{\:\cdot\:}{#1}
}
\DeclareMathOperator{\opMI}{I}
\DeclarePairedDelimiterXPP{\MIof}[1]{\opMI}{[}{]}{}{%
    \renewcommand\given{\MidSymbol[\delimsize\vert]}
    \ifblank{#1}{\:\cdot\:}{#1}
}
\DeclarePairedDelimiterXPP{\CrossEntropy}[2]{\opEntropy}{(}{)}{}{%
    \ifblank{#1#2}{\:\cdot\: \MidSymbol[\delimsize\Vert] \:\cdot\:}{#1 \MidSymbol[\delimsize\Vert] #2}
}
\DeclareMathOperator{\opKale}{D_\mathrm{KL}}
\DeclarePairedDelimiterXPP{\Kale}[2]{\opKale}{(}{)}{}{%
    \ifblank{#1#2}{\:\cdot\: \MidSymbol[\delimsize\Vert] \:\cdot\:}{#1 \MidSymbol[\delimsize\Vert] #2}
}
\DeclareMathOperator{\opp}{p}
\DeclarePairedDelimiterXPP{\pof}[1]{\opp}{(}{)}{}{%
    \renewcommand\given{\MidSymbol[\delimsize\vert]}
    \ifblank{#1}{\:\cdot\:}{#1}
}
\DeclarePairedDelimiterXPP{\pcof}[2]{\opp_{#1}}{(}{)}{}{%
    \renewcommand\given{\MidSymbol[\delimsize\vert]}
    \ifblank{#2}{\:\cdot\:}{#2}
}
\DeclareMathOperator{\opP}{\mathbb{P}}
\DeclarePairedDelimiterXPP{\pfor}[1]{\opP}{\lbrack}{\rbrack}{}{%
    \renewcommand\given{\MidSymbol[\delimsize\vert]}
    \ifblank{#1}{\:\cdot\:}{#1}
}
\DeclarePairedDelimiterXPP{\pcfor}[2]{\opp_{#1}}{\lbrack}{\rbrack}{}{%
    \renewcommand\given{\MidSymbol[\delimsize\vert]}
    \ifblank{#2}{\:\cdot\:}{#2}
}
\DeclarePairedDelimiterXPP{\hpcof}[2]{\hat{\opp}_{#1}}{(}{)}{}{%
    \renewcommand\given{\MidSymbol[\delimsize\vert]}
    \ifblank{#2}{\:\cdot\:}{#2}
}
\DeclareMathOperator{\opq}{q}
\DeclarePairedDelimiterXPP{\qof}[1]{\opq}{(}{)}{}{%
    \renewcommand\given{\MidSymbol[\delimsize\vert]}
    \ifblank{#1}{\:\cdot\:}{#1}
}
\DeclarePairedDelimiterXPP{\qcof}[2]{\opq_{#1}}{(}{)}{}{%
    \renewcommand\given{\MidSymbol[\delimsize\vert]}
    \ifblank{#2}{\:\cdot\:}{#2}
}
\DeclarePairedDelimiterXPP{\varHof}[2]{\opEntropy_{\ifblank{#1}{\:\cdot\:}{#1}}}{[}{]}{}{%
    \renewcommand\given{\MidSymbol[\delimsize\vert]}
    \ifblank{#2}{\:\cdot\:}{#2}
}
\DeclarePairedDelimiterXPP{\xvarHof}[2]{\opEntropy_{\ifblank{#1}{\:\cdot\:}{#1}}}{(}{)}{}{%
    \renewcommand\given{\MidSymbol[\delimsize\vert]}
    \ifblank{#2}{\:\cdot\:}{#2}
}
\DeclarePairedDelimiterXPP{\aICof}[1]{\hat \opInformationContent}{(}{)}{}{%
    \ifblank{#1}{\:\cdot\:}{#1}
}
\DeclarePairedDelimiterXPP{\aHof}[1]{\hat \opEntropy}{[}{]}{}{%
    \renewcommand\given{\MidSymbol[\delimsize\vert]}
    \ifblank{#1}{\:\cdot\:}{#1}
}
\DeclarePairedDelimiterXPP{\axHof}[1]{\hat \opEntropy}{(}{)}{}{%
    \ifblank{#1}{\:\cdot\:}{#1}
}
\DeclarePairedDelimiterXPP{\aMIof}[1]{\hat \opMI}{[}{]}{}{%
    \renewcommand\given{\MidSymbol[\delimsize\vert]}
    \ifblank{#1}{\:\cdot\:}{#1}
}
\DeclarePairedDelimiterXPP{\aCrossEntropy}[2]{\hat \opEntropy}{(}{)}{}{%
    \ifblank{#1#2}{\:\cdot\: \MidSymbol[\delimsize\Vert] \:\cdot\:}{#1 \MidSymbol[\delimsize\Vert] #2}
}
\DeclarePairedDelimiterXPP{\aKale}[2]{\hat \opKale}{(}{)}{}{%
    \ifblank{#1#2}{\:\cdot\: \MidSymbol[\delimsize\Vert] \:\cdot\:}{#1 \MidSymbol[\delimsize\Vert] #2}
}
\newcommand{\numclasses}{K}
\newcommand{\classindex}{k}
\newcommand{\conf}{\mathrm{Conf_{Top1}}}
\newcommand{\acc}{\mathrm{Acc}}
\newcommand{\predacc}{\mathrm{PredAcc}}
\newcommand{\acctop}{\mathrm{Acc_{Top1}}}
\newcommand{\CE}{\mathrm{CE}}
\newcommand{\testerror}{\mathrm{TestError}}
\newcommand{\disrate}{\mathrm{Dis}}
\newcommand{\ECE}{ECE}
\newcommand{\CACE}{\mathrm{CACE}}
\newcommand{\ECACE}{\mathrm{ECACE}}
\newcommand{\CWCE}{\mathrm{CWCE}}
\newcommand{\Yhat}{\hat{Y}}
\newcommand{\yhat}{\hat{y}}
\newcommand{\Ypred}{\Yhat}
\newcommand{\ypred}{\yhat}
\newcommand{\Ytrue}{Y}
\newcommand{\ytrue}{y}
\newcommand{\levelthreshold}{q}
\newcommand{\w}{\omega}
\newcommand{\W}{\Omega}
\newcommand\independent{\protect\mathpalette{\protect\independenT}{\perp}}
\def\independenT#1#2{\mathrel{\rlap{$#1#2$}\mkern2mu{#1#2}}}
\DeclareMathOperator*{\argmax}{arg\,max}
\newcommand{\defeq}{\vcentcolon=}
\newcommand{\verteq}{\rotatebox{90}{$\,=$}}
\newcommand{\equalto}[2]{\overset{\displaystyle\underset{\mkern4mu\verteq}{#2}}{#1}}
\newcommand{\andreas}[1]{}
\newcommand{\yarin}[1]{}
\newcommand{\forreviewers}[1]{}
\renewcommand{\andreas}[1]{{\leavevmode\color{blue}{ \footnotesize AK} {\tiny says: }#1}}
\renewcommand{\yarin}[1]{{\leavevmode\color{orange}{ \footnotesize YG} {\tiny says: }#1}}
\renewcommand{\forreviewers}[1]{{\leavevmode\color{purple}{\footnotesize NOTE} {\tiny please: }#1}}
\newcommand{\innerexpr}[1]{{\color{solarized@base03}#1}}
\title{A Note on ``Assessing Generalization of SGD via Disagreement''}
\author{\name Andreas Kirsch \email andreas.kirsch@cs.ox.ac.uk \\
        \name Yarin Gal \email yarin.gal@cs.ox.ac.uk \\
        \addr OATML, Department of Computer Science\\
            University of Oxford
        }
\begin{document}

\maketitle

\begin{abstract}
    Several recent works find empirically that the average test error of deep neural networks can be estimated via the prediction disagreement of models, which does not require labels. In particular, \citet{jiang2021assessing} show for the disagreement between two separately trained networks that this `Generalization Disagreement Equality' follows from the well-calibrated nature of deep ensembles under the notion of a proposed `class-aggregated calibration.'
    In this reproduction, we show that the suggested theory might be impractical because a deep ensemble's calibration can deteriorate as prediction disagreement increases, which is precisely when the coupling of test error and disagreement is of interest, while labels are needed to estimate the calibration on new datasets\footnote{Code at \url{https://github.com/BlackHC/2202.01851}}.
    Further, we simplify the theoretical statements and proofs, showing them to be straightforward within a probabilistic context, unlike the original hypothesis space view employed by \citet{jiang2021assessing}.
\end{abstract}

\section{Introduction}

Machine learning models can cause harm when their predictions become unreliable, yet we trust them blindly.
This is not fiction but has happened in real-world applications of machine learning \citep{schneider2021algorithmic}.
Thus, there has been significant research interest in model robustness, uncertainty quantification, and bias mitigation.
In particular, finding ways to bound the test error of a trained deep neural network without access to the labels would be of great importance: one could estimate the performance of models in the wild where unlabeled data is ubiquitous, labeling is expensive, and the data often does not match the training distribution. Crucially, it would provide a signal on when to trust the output of a model and when to defer to human experts instead.

Several recent works \citep{chen2021detecting,granese2021doctor,garg2022leveraging,jiang2021assessing} look at the question of how model predictions in a non-Bayesian setting can be used to estimate model accuracy.
In this paper, we focus on one work\footnote{An ICLR 2022 Spotlight, which has spawned additional follow-up works, e.g. \citet{baek2022agreementontheline}.} \citep{jiang2021assessing} specifically, and examine the theoretical and empirical results from a Bayesian perspective.

In a Bayesian setting, \emph{epistemic uncertainty} \citep{der2009aleatory} captures the uncertainty of a model about the reliability of its predictions, that is epistemic uncertainty quantifies the uncertainty of a model about its predictive distribution, while \emph{aleatoric uncertainty}
quantifies the ambiguity within the predictive distribution and the label noise, c.f.\ \citet{kendall2017uncertainties}. Epistemic uncertainty thus tells us whether we can trust a model's predictions or not. Assuming a \emph{well-specified} and \emph{well-calibrated} Bayesian model, when its predictive distribution has low epistemic uncertainty for an input, it can be trusted. But likewise, a Bayesian model's calibration ought to deteriorate as epistemic uncertainty increases for a sample: in that case, the predictions become less reliable and so does the model's calibration.

In this context, calibration is an \emph{aleatoric} metric for a model's reliability \citep{gopal2021calibration}. Calibration captures how well a model's confidence for a given prediction matches the actual frequency of that prediction in the limit of observations \emph{in distribution}: when a model is $70\%$ confident about assigning label $A$, does $A$ indeed occur with $70\%$ probability in these instances?

\citet{jiang2021assessing}, while not Bayesian, %
make the very interesting empirical and theoretical discovery that deep ensembles satisfy a `\emph{Generalization Disagreement Equality}' when they are well-calibrated according to a proposed `\emph{class-aggregated calibration}' (or a `\emph{class-wise calibration}') and empirically find that the respective calibration error generally bounds the absolute difference between the test error and `\emph{disagreement rate}.'
\citet{jiang2021assessing}'s theory builds upon \citet{nakkiran2020distributional}'s `\emph{Agreement Property}' and provides backing for an empirical connection between the \emph{test error} and \emph{disagreement rate} of two separately trained networks on the same training data. 
Yet, while \citet{nakkiran2020distributional} limit the applicability of their Agreement Property to in-distribution data, \citet{jiang2021assessing} carefully extend it: 
`\emph{our theory is general and makes no restrictions on the hypothesis class, the algorithm, the source of stochasticity, or the test distributions (which may be different from the training distribution)}' with qualified evidence: `\emph{we present preliminary observations showing that GDE is approximately satisfied even for certain distribution shifts within the PACS \citep{li2017deeper} dataset.}'

In this paper, we present a new perspective on the theoretical results using a standard probabilistic approach for discriminative (Bayesian) models, whereas \citet{jiang2021assessing} use a hypothesis space of models that output one-hot predictions. Indeed, their theory does not require one-hot predictions, separately trained models (deep ensembles) or Bayesian models. Moreover, as remarked by the authors, their theoretical results also apply to a single model that outputs softmax probabilities. We will see that our perspective greatly simplifies the results and proofs. 

This also means that the employed notion of disagreement rate \emph{does not capture epistemic uncertainty but overall uncertainty}, similar to the predictive entropy, which is a major difference to Bayesian approaches which can evaluate epistemic uncertainty separately \citep{smith2018understanding}. %
Overall uncertainty is the sum of aleatoric and epistemic uncertainty.
For in-distribution data, epistemic uncertainty will generally be low: overall uncertainty will mainly capture aleatoric uncertainty and align well with (aleatoric) calibration measures. However, under distribution shift, epistemic uncertainty can be a confounding factor. %

Importantly, we find that the connection between the proposed calibration metrics and the gap between test error and disagreement rate exists because the introduced notion of class-aggregated calibration is so strong that this connection follows almost at once.

Moreover, the suggested approach is circular\footnote{This was added as a caveat to the camera-ready version of \citet{jiang2021assessing} after reviewing a preprint of this paper.}: calibration must be measured on the data distribution we want to evaluate. Otherwise, we cannot bound the difference between the test error and the disagreement rate and obtain a signal on how trustworthy our model is. This reintroduces the need for labels on the unlabeled dataset, limiting practicality. 
Alternatively, one would have to assume that these calibration metrics do not change for different datasets or under distribution shifts, which we show not to hold: deep ensembles are less calibrated the more the ensemble members disagree (even on in-distribution data).

Lastly, we draw connections and show that the `class-aggregated calibration error' and the `class-wise calibration error'\footnote{Which is not explicitly introduced in \citet{jiang2021assessing} but can be analogously constructed.} are equivalent to the `adaptive calibration error' and `static calibration error' introduced in \citet{nixon2019measuring} and its implementation.

\textbf{Outline.} We introduce the necessary background and notation in \S\ref{sec:background}. 
In \S\ref{sec:rephrasing_jiang} we rephrase the theoretical statements from \citet{jiang2021assessing} using a parameter distribution (instead of a version space) and auxiliary random variables. This allows us to simplify the theoretical statements and proofs greatly in \S\ref{sec:rederivation} and to examine the connection to \citet{nixon2019measuring}. Finally, in \S\ref{sec:empirical_example}, we provide empirical evidence that deep ensembles are less calibrated exactly when their ensemble members disagree. %

\section{Background \& Setting}
\label{sec:background}

We introduce relevant notation, the initial Bayesian formalism, the connection to deep ensembles, and the probabilistic model. We restate the statements from \citet{jiang2021assessing} using this formalism in \S\ref{sec:rephrasing_jiang}.

\textbf{Notation.} We use an implicit notation for expectations $\implicitE{f(X)}$ when possible. For additional clarity, we also use $\E{X}{f(X)}$ and $\simpleE{\pof{x}}{f(x)}$, which fix the random variables and distribution, respectively, when needed. 

We will use nested probabilistic expressions of the form $\implicitE{\pof{\Ypred = \Ytrue \given X}}$.
Prima facie, this seems unambiguous, but is $\pof{\Ypred = \Ytrue \given X}$ a transformed random variable of only $X$ or also of $\Ytrue$ (and $\Ypred$): what are we taking the expectation over?
This is not always unambiguous, so we disambiguate between the probability for an event defined by an expression $\pfor{\ldots} = \E{}{\indicator{\ldots}}$, where $\indicator{\ldots}$ is the indicator function\footnote{The indicator function is $1$ when the predicate `$\ldots$' is true and $0$ otherwise.},
and a probability given specific outcomes for various random variables $\pof{\ypred \given x}$, c.f.:
\begin{align}
    \pfor{\Ypred = \Ytrue \given X} &= \E{\Ypred, \Ytrue}{\mathbb{1}{\{\Ypred = \Ytrue\}} \given X}
    = \simpleE{\pof{\ypred, \ytrue \given X}}{\mathbb{1}{\{\ypred = \ytrue\}}},
\end{align}
which is a transformed random variable of $X$, while
$\pof{\Ypred = \Ytrue \given X}$ is simply a (transformed) random variable, applying the probability density on the random variables $\Ytrue$ and $X$. Put differently, $\Ytrue$ is bound within the former but not the latter: $\pfor{\ldots \given X}$ is a transformed random variable of $X$, and any random variable that appears within the $\ldots$ is bound within that expression. 

\textbf{Probabilistic Model.} %
We assume classification with $\numclasses$ classes. For inputs \(X\) with ground-truth labels \(\Ytrue\), we have a Bayesian model with parameters \(\W\) that makes predictions \(\Ypred\): 
\begin{equation}
    \pof{\ytrue, \ypred, \w \given x} = \pof{\ytrue \given x} \, \pof{\ypred \given x, \w} \, \pof{\w}.
\end{equation}
We focus on model evaluation. (Input) samples $x$ can come either from `in-distribution data' which follows the training set or from samples under covariate shift (distribution shift).
The expected prediction over the model parameters is the \emph{marginal predictive distribution}:
\begin{equation}
    \pof{\ypred \given x} 
    = \E{\W}{\pof{\ypred \given x, \W}}.
\end{equation}

\textbf{On \boldmath{$\pof{\w}$}.} The main emphasis in Bayesian modelling can be Bayesian inference or Bayesian model averaging \citep{wilson2020bayesian}. Here we concentrate on the model averaging perspective, and for simplicity take the model averaging to be with respect to \emph{some} distribution $\pof{\w}$. 
Hence, we will use $\pof{\w}$ as the push-forward of models initialized with different initial seeds through SGD to minimize the negative log likelihood with weight decay and a specific learning rate schedule (MLE or MAP) \citep{mukhoti2021deterministic}:

\begin{assumption}
    We assume that \(\pof{\w}\) is a distribution of possible models we obtain by training with a specific training regime on the training data with different seeds. A single \(\w\)  identifies a single trained model.
\end{assumption}

\textbf{Deep Ensembles.} We cast deep ensembles \citep{hansen1990neural,lakshminarayanan2016simple}, which refer to training multiple models and averaging predictions, into the introduced Bayesian perspective above by viewing them as an empirical finite sample estimate of the parameter distribution $\pof{\w}$. Then, $\w_1, \ldots, \w_N \sim \pof{\w}$ drawn i.i.d.\ are the \emph{ensemble members}. 

Again, the implicit model parameter distribution $\pof{w}$ is given by the models that are obtained through training. Hence, we can view the predictions of a deep ensemble or the ensemble's prediction disagreement for specific $x$ (or over the data) as empirical estimates of the predictions or the model disagreement using the implicit model distribution, respectively. %

\textbf{Calibration.}
A model's calibration for a given $x$ measures how well the model's \emph{top-1 (argmax) confidence}
\begin{align}
    \conf \defeq \pof{\Ypred = \argmax_\classindex \innerexpr{\pof{\Ypred = \classindex \given X}} \given X}
\end{align}
matches its \emph{top-1 accuracy}
\begin{align}
    \acctop \defeq \pof{\Ytrue = \argmax_\classindex \innerexpr{\pof{\Ypred = \classindex \given X}} \given X},
\end{align}
where we define both as transformed random variables of $X$.
The calibration error is usually defined as the absolute difference between the two:
\begin{align}
    \CE \defeq \lvert \acctop - \conf \rvert.
\end{align}
In general, we are interested in the \emph{expected calibration error (ECE)} over the data distribution \citep{guo2017calibration} where we bin samples by their top-1 confidence.
Intuitively, the ECE will be low when we can trust the model's top-1 confidence on the given data distribution.

We usually use top-1 predictions in machine learning. However, if we were to draw $\Ypred$ according to $\pof{\ypred \given x}$ instead, the (expected) accuracy would be:
\begin{align}
    \acc &\defeq
    \pfor{\Ytrue = \Ypred \given X} \\
    &= \sum_\classindex \pof{\Ytrue = \classindex \given X} \, \pof{\Ypred = \classindex \given X} \\
    &= \E{\Ytrue}{\pof{\Ypred = \Ytrue \given X} \given X}, 
\end{align}
as a random variable of $X$.
Usually we are interested in the accuracy over the whole dataset:
\begin{align}
    \pfor{\Ypred = Y} &= \E{}{\acc} = \E{X}{\pfor{\Ypred = Y \given X}} = \E{X, \Ytrue}{\pof{\Ypred = \Ytrue \given X}}.
\end{align}
For example, for binary classification with two classes A and B, if class A appears with probability 0.7 and a model predicts class A with probability 0.2 (and thus class B appears with probability 0.3, which a model predicts as 0.8), its accuracy is $0.7 \times 0.2 + 0.3 \times 0.8 = 0.38$, while the top-1 accuracy is $0.3$. Likewise, the predicted accuracy is $0.2^2 + 0.8^2=0.68$ while the top-1 predicted accuracy is $0.8$.

\section{Rephrasing \citet{jiang2021assessing} in a Probabilistic Context}
\label{sec:rephrasing_jiang}

We present the same theoretical results as \citet{jiang2021assessing} but use a Bayesian formulation instead of a hypothesis space and define the relevant quantities as (transformed) random variables. 
As such, our definitions and theorems are equivalent and follow the paper but look different. We show these equivalences in \S\ref{app:equivalent_definitions} in the appendix and prove the theorems and statements themselves in the next section. 

First, however, we note a distinctive property of \citet{jiang2021assessing}. It is assumed that each $\pof{\ypred \given x, \w}$ is always one-hot for any $\w$. In practice, this could be achieved by turning a neural network's softmax probabilities into a one-hot prediction for the $\argmax$ class. We call this the \emph{Top1-Output-Property} (TOP).
\begin{assumption}
    The Bayesian model $\pof{\ypred, \w \given x}$ satisfies TOP: $\pof{\ypred \given x, \w}$ is one-hot for all $x$ and $\w$.
\end{assumption}

\begin{definition}
The \emph{test error} and \emph{disagreement rate}, as transformed random variables of $\W$ (and $\W'$), are:
\begin{align}
    \testerror &\defeq \pfor{\Ypred \not= \Ytrue \given \W}\\
    \bigl( &= 1 - \pfor{\Ypred = \Ytrue \given \W} \\
    &= 1 - \E{X,\Ytrue} {\pof{\Ypred = \Ytrue \given X, \W}}, \\
    &= 1 - \simpleE{\pof{x,\ytrue}} {\pof{\Ypred = \ytrue \given x, \W}}) \bigr), \\
    \disrate &\defeq \pfor{\Ypred \not= \Ypred' \given \W, \W'} \\
    \bigl( &= 1 - \pfor{\Ypred = \Ypred' \given \W, \W'} \\
    &= 1 - \E{X, \Ypred} {\pof{\Ypred' = \Ypred \given X, \W'} \given \W, \W'} \\
    &= 1 - \simpleE{\pof{x, \ypred \given \W}} {\pof{\Ypred' = \ypred \given x, \W'}}\bigr),
\end{align}
where for the disagreement rate, we expand our probabilistic model to take a second model $\W'$ with prediction $\Ypred'$ into account (and which uses the same parameter distribution), so:
\begin{align}
    \pof{\ytrue, \ypred, \w, \ypred', \w' \given x} \defeq &
        \pof{\ytrue \given x} \, 
        \pof{\ypred \given x, \w} \, \pof{\w} \pof{\Ypred = \ypred' \given x, \W = \w'} \, \pof{\W=\w'}. \notag
\end{align}
\end{definition}

\citet{jiang2021assessing} then introduce the property of interest: %
\begin{definition}
    A Bayesian model $\pof{\ypred, \w \given x}$ fulfills the \emph{Generalization Disagreement Equality (GDE)} when:
    \begin{align}
        &\E{\W}{\testerror(\W)} = \E{\W, \W'}{\disrate(\W, \W')} \quad (
        \Leftrightarrow \implicitE{\testerror} = \implicitE{\disrate}).
    \end{align} 
\end{definition}
When this property holds, we seemingly do not require knowledge of the labels to estimate the (expected) test error: computing the (expected) disagreement rate is sufficient.

Two different types of calibration are then introduced, \emph{class-wise} and \emph{class-aggregated} calibration, and it is shown that they imply the GDE:
\begin{definition}
    The Bayesian model $\pof{\ypred, \w \given x}$ satisfies \emph{class-wise calibration} when for any $\levelthreshold \in [0, 1]$ and any class $\classindex \in [\numclasses]$:
    \begin{align}
        \pof{\Ytrue = \classindex \given \innerexpr{\pof{\Ypred = \classindex \given X}} = \levelthreshold} = \levelthreshold.
    \end{align}
    Similarly, the Bayesian model $\pof{\ypred, \w \given x}$ satisfies \emph{class-aggregated calibration} when for any $\levelthreshold \in [0, 1]$:
    \begin{align}
        \sum_\classindex \pof{\Ytrue = \classindex , \innerexpr{\pof{\Ypred=\classindex \given X}} = \levelthreshold} = \levelthreshold \sum_\classindex \pof{\innerexpr{\pof{\Ypred=\classindex \given X}} = \levelthreshold}
        . \label{eq:cac}
    \end{align}
\end{definition}
\begin{restatable}{theorem}{cacgdetheorem}
    \label{thm:cac_gde}
    When the Bayesian model $\pof{\ypred, \w \given x}$ satisfies class-wise or class-aggregated calibration, it also satisfies GDE.
\end{restatable}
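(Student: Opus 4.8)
The plan is to reduce the GDE to a single second-moment identity about the marginal predictive distribution $\pof{\Ypred = \classindex \given X}$, and then read that identity off directly from the calibration assumption. First I would marginalize out the parameters in both quantities. Because $\Ytrue$ depends only on $X$ while $\Ypred$ depends on $X$ and $\W$ (so $\Ypred$ and $\Ytrue$ are conditionally independent given $X$), and because $\W, \W'$ are drawn i.i.d.\ from $\pof{\w}$ (so $\Ypred$ and $\Ypred'$ are conditionally independent given $X$), taking expectations over the parameters collapses the conditional predictions into the marginal predictive. Concretely I expect to obtain
\begin{align}
    \implicitE{\testerror} &= 1 - \E{X}{\textstyle\sum_\classindex \pof{\Ytrue = \classindex \given X}\,\pof{\Ypred = \classindex \given X}}, \\
    \implicitE{\disrate} &= 1 - \E{X}{\textstyle\sum_\classindex \pof{\Ypred = \classindex \given X}^2},
\end{align}
where the first line uses $\E{\W}{\pof{\Ypred = \classindex \given X, \W}} = \pof{\Ypred = \classindex \given X}$ and the second additionally uses independence of $\W$ and $\W'$ so that the cross term factorizes into $\pof{\Ypred = \classindex \given X}^2$. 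Subtracting, the GDE becomes equivalent to the identity
\begin{align}
    \E{X}{\textstyle\sum_\classindex \pof{\Ytrue = \classindex \given X}\,\pof{\Ypred = \classindex \given X}} = \E{X}{\textstyle\sum_\classindex \pof{\Ypred = \classindex \given X}^2}. \label{eq:moment_identity}
\end{align}
Notably I do not expect to use TOP anywhere in this reduction, which matches the paper's claim that one-hot outputs are inessential.

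Then I would dispatch the two calibration notions. Summing the class-wise calibration identity $\pof{\Ytrue = \classindex \given \pof{\Ypred = \classindex \given X} = \levelthreshold} = \levelthreshold$ over $\classindex$ recovers class-aggregated calibration, so it suffices to prove \eqref{eq:moment_identity} under the weaker, class-aggregated hypothesis. To do so I would condition each side of \eqref{eq:moment_identity} on the random level $\levelthreshold = \pof{\Ypred = \classindex \given X}$ via the tower rule, i.e.\ on the $\sigma$-algebra generated by this transformed random variable of $X$. The left-hand summand becomes $\int \levelthreshold \, \pof{\Ytrue = \classindex, \pof{\Ypred = \classindex \given X} = \levelthreshold}\, d\levelthreshold$ and the right-hand summand becomes $\int \levelthreshold^2 \, \pof{\pof{\Ypred = \classindex \given X} = \levelthreshold}\, d\levelthreshold$. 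Summing over $\classindex$ and substituting class-aggregated calibration \eqref{eq:cac}, which equates $\sum_\classindex \pof{\Ytrue = \classindex, \pof{\Ypred=\classindex \given X} = \levelthreshold}$ with $\levelthreshold \sum_\classindex \pof{\pof{\Ypred=\classindex \given X}=\levelthreshold}$, turns the left integrand into exactly the right integrand, establishing \eqref{eq:moment_identity} and hence the GDE.

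The main obstacle is purely measure-theoretic: conditioning on the event $\{\pof{\Ypred = \classindex \given X} = \levelthreshold\}$ typically has probability zero for a continuously distributed confidence, so the ``densities'' $\pof{\Ytrue = \classindex, \pof{\Ypred=\classindex\given X}=\levelthreshold}$ must be read as a disintegration of the joint law of $(\Ytrue, \pof{\Ypred=\classindex\given X})$ rather than as elementary conditional probabilities. I would keep this rigorous by working with the conditional expectation $\E{X}{\pof{\Ytrue = \classindex \given X} \given \pof{\Ypred = \classindex \given X}}$, which class-wise calibration forces to equal $\pof{\Ypred = \classindex \given X}$ almost surely, so that each per-class contribution to the left side of \eqref{eq:moment_identity} already matches the right side before summation; the class-aggregated case is the same statement after summing over $\classindex$ and only asks for the aggregate to match. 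Beyond this, the only care needed is bookkeeping of which symbols are bound inside the nested probability expressions, exactly as flagged in the notation paragraph.
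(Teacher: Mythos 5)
Your proposal is correct and follows essentially the same route as the paper: reduce GDE to the first-moment identity $\implicitE{\acc}=\implicitE{\predacc}$ by marginalizing out $\W,\W'$ via conditional independence given $X$, then obtain that identity from class-aggregated calibration by a change of variables onto the level sets of the confidence. The only difference is packaging --- the paper first reformulates calibration as equality of the laws of the auxiliary variables $S=\pof{\Ypred=\Ytrue\given X}$ and $T=\pof{\Ypred\given X}$ (via the tautology $\pof{\Ypred=\classindex\given \pof{\Ypred=\classindex\given X}=\levelthreshold}=\levelthreshold$) and then equates $\E{S}{S}=\E{T}{T}$, whereas you substitute the class-aggregated identity per class directly inside the integral; these are the same computation.
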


Finally, \citet{jiang2021assessing} introduce the \emph{class-aggregated calibration error} similar to the ECE and then use it to bound the magnitude of any GDE gap:
\begin{definition}
The \emph{class-aggregated calibration error (CACE)} is the integral of the absolute difference of the two sides in \cref{eq:cac} over possible $\levelthreshold \in [0,1]$:
\begin{align}
    \CACE \defeq \int_{\levelthreshold \in [0, 1]} 
    \begin{aligned}[t]
        \bigl| &\sum_\classindex \pof{\Ytrue = \classindex , \innerexpr{\pof{\Ypred=\classindex \given X}} = \levelthreshold} - \levelthreshold \sum_\classindex \pof{\innerexpr{\pof{\Ypred=\classindex \given X}} = \levelthreshold} \bigr| d\levelthreshold.
    \end{aligned}
\end{align}
\end{definition}
\begin{restatable}{theorem}{caceboundtheorem}
    \label{thm:cace_bound}
    For any Bayesian model $\pof{\ypred, \w \given x}$, we have:
    \begin{align*}
        \lvert \implicitE{\testerror} - \implicitE{\disrate} \rvert \le \CACE.
    \end{align*}
\end{restatable}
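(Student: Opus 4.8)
The plan is to reduce both $\implicitE{\testerror}$ and $\implicitE{\disrate}$ to expectations against the \emph{marginal} predictive distribution $\pof{\ypred \given x}$, so that their difference collapses to a single expectation over $X$, and then to rewrite that expectation as an integral over the confidence level $\levelthreshold$ so it can be compared term-by-term with the integrand defining $\CACE$. First I would marginalize out the model parameters: taking the outer expectation over $\W$ in the definition of the test error replaces $\pof{\Ypred = \Ytrue \given X, \W}$ by $\E{\W}{\pof{\Ypred = \Ytrue \given X, \W}} = \pof{\Ypred = \Ytrue \given X}$, giving
\begin{align}
    \implicitE{\testerror} &= 1 - \E{X}{\textstyle\sum_\classindex \pof{\Ytrue = \classindex \given X} \, \pof{\Ypred = \classindex \given X}}.
\end{align}
For the disagreement rate I would use that $\W$ and $\W'$ are i.i.d.\ from $\pof{\w}$, so $\Ypred$ and $\Ypred'$ are conditionally independent given $X$ and share the marginal $\pof{\cdot \given X}$; marginalizing both parameters yields
\begin{align}
    \implicitE{\disrate} &= 1 - \E{X}{\textstyle\sum_\classindex \pof{\Ypred = \classindex \given X}^2}.
\end{align}
Notably neither reduction uses TOP, consistent with the claim holding for any Bayesian model. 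Subtracting the two then gives
\begin{align}
    \implicitE{\testerror} - \implicitE{\disrate} &= \E{X}{\textstyle\sum_\classindex \pof{\Ypred = \classindex \given X} \, \bigl( \pof{\Ypred = \classindex \given X} - \pof{\Ytrue = \classindex \given X} \bigr)}.
\end{align}

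Next I would perform a layer-cake rewriting to expose the $\CACE$ integrand. For each class I would condition on the scalar value $\levelthreshold = \pof{\Ypred = \classindex \given X}$ of the transformed random variable $\pof{\Ypred = \classindex \given X}$, using the disintegration identity $\E{X}{\phi(X) \, \pof{\Ypred = \classindex \given X}} = \int_{[0,1]} \levelthreshold \, \E{X}{\phi(X) \given \pof{\Ypred = \classindex \given X} = \levelthreshold} \, \pof{\pof{\Ypred = \classindex \given X} = \levelthreshold} \, d\levelthreshold$. Applying it with $\phi(X) = \pof{\Ypred = \classindex \given X}$ turns $\E{X}{\pof{\Ypred = \classindex \given X}^2}$ into $\int_{[0,1]} \levelthreshold^2 \, \pof{\pof{\Ypred = \classindex \given X} = \levelthreshold} \, d\levelthreshold$, and applying it with $\phi(X) = \pof{\Ytrue = \classindex \given X}$ turns the accuracy term into $\int_{[0,1]} \levelthreshold \, \pof{\Ytrue = \classindex, \pof{\Ypred = \classindex \given X} = \levelthreshold} \, d\levelthreshold$. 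Summing over $\classindex$ and collecting the common factor $\levelthreshold$, the difference becomes
\begin{align}
    \implicitE{\testerror} - \implicitE{\disrate} &= - \int_{[0,1]} \levelthreshold \, g(\levelthreshold) \, d\levelthreshold,
\end{align}
where $g(\levelthreshold) \defeq \sum_\classindex \pof{\Ytrue = \classindex, \pof{\Ypred = \classindex \given X} = \levelthreshold} - \levelthreshold \sum_\classindex \pof{\pof{\Ypred = \classindex \given X} = \levelthreshold}$ is exactly the signed quantity whose absolute value is integrated to define $\CACE$.

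Finally I would apply the triangle inequality for integrals together with $0 \le \levelthreshold \le 1$:
\begin{align}
    \lvert \implicitE{\testerror} - \implicitE{\disrate} \rvert = \Bigl\lvert \int_{[0,1]} \levelthreshold \, g(\levelthreshold) \, d\levelthreshold \Bigr\rvert \le \int_{[0,1]} \levelthreshold \, \lvert g(\levelthreshold) \rvert \, d\levelthreshold \le \int_{[0,1]} \lvert g(\levelthreshold) \rvert \, d\levelthreshold = \CACE,
\end{align}
which is the claim. The main obstacle I expect is making the layer-cake step fully rigorous: $\pof{\pof{\Ypred = \classindex \given X} = \levelthreshold}$ must be read as the law of the transformed random variable $\pof{\Ypred = \classindex \given X}$ in $\levelthreshold$ (with possible atoms when the predicted probabilities are discrete), and one must verify that conditioning on its value and integrating against $\levelthreshold$ faithfully reproduces the original expectations over $X$. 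By contrast, the parameter marginalization and the closing inequality are routine, and the factor $\levelthreshold \le 1$ is precisely what makes the GDE gap strictly controlled by $\CACE$ rather than merely proportional to it.
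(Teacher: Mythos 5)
Your proof is correct and follows essentially the same route as the paper's: both express the GDE gap as the $\levelthreshold$-weighted integral of the signed CACE integrand and then conclude via $\levelthreshold \le 1$ and the triangle inequality for integrals. The only difference is bookkeeping --- the paper first collapses the per-class sums into the laws of the scalar random variables $\pof{\Ypred=\Ytrue \given X}$ and $\pof{\Ypred \given X}$ (its Lemma~\ref{lemma:cc_nice}) before integrating, whereas you keep the class-indexed sums and disintegrate class by class, which amounts to the same computation.
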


In the following section, we simplify the definitions and prove the statements using elementary probability theory, showing that notational complexity is the main source of complexity.

\section{GDE is Class-Aggregated Calibration in Expectation}
\label{sec:rederivation}

We show that proof for \Cref{thm:cace_bound} is trivial if we use different but equivalent definitions of the class-wise and class-aggregate calibration. First though, we establish a better understanding for these definitions by examining the GDE property \(\implicitE{\testerror} = \implicitE{\disrate}\). For this, we expand the definitions of $\implicitE{\testerror}$ and $\implicitE{\disrate}$, and use random variables to our advantage.

We define a quantity which will be of intuitive use later on: the \emph{predicted accuracy}
\begin{align}
    \predacc &\defeq \E{\Ypred}{\pof{\Ypred \given X} \given X} = \sum_\classindex \pof{\Ypred = \classindex \given X} \pof{\Ypred = \classindex \given X},
\end{align}
as a random variable of $X$. It measures the expected accuracy assuming the model's predictions are correct, that is the true labels follow $\pof{\ypred \given x}$. This also assumes that we draw $\Yhat$ accordingly and do not always use the top-1 prediction.

\textbf{Revisiting GDE.} %
On the one hand, we have:
\begin{align}
    \implicitE{\testerror} & = \E{\W}{\pfor{\Ypred \not= Y \given \W}} \\
    & = 1 - \pfor{\Ypred = Y} \\
    & = 1 - \E{X, \Ypred}{\pof{\Ytrue = \Ypred \given X}} \\
    & = 1 - \implicitE{\acc}
\end{align}
and on the other hand, we have:
\begin{align}
    \implicitE{\disrate} &= \E{\W, \W'}{\pfor{\Ypred \not= \Ypred' \given \W, \W'}} \\
    &= 1 - \E{\W, \W'}{\pfor{\Ypred = \Ypred' \given \W, \W'}} \\
    &= 1 - \pfor{\Ypred = \Ypred'} \label{eq:re_gde_soi0} \\
    &= 1 - \E{X, \Ypred}{\pof{\Ypred' = \Ypred \given X}} \label{eq:re_gde_soi1}\\
    &= 1 - \E{X, \Ypred}{\pof{\Ypred \given X}} \label{eq:re_gde_soi2} \\
    &= 1 - \implicitE{\predacc}.
\end{align}
The step from \eqref{eq:re_gde_soi0} to \eqref{eq:re_gde_soi1} is valid because $\Ypred \independent \Ypred' \given X$, and the step from \eqref{eq:re_gde_soi1} to \eqref{eq:re_gde_soi2} is valid because $\pof{\ypred' \given x} = \pof{\ypred \given x}$. Thus, we can rewrite \Cref{thm:cac_gde} as:
\begin{importantresult}
\begin{lemma}
    \label{lemma:nicer_gde}
    The model $\pof{\ypred \given x}$ satisfies GDE, when
    \begin{align}
       \implicitE{\acc} = \implicitE{\pof{\Ytrue = \Ypred\given X}} = \implicitE{\pof{\Ypred \given X}} = \implicitE{\predacc},
   \end{align} 
   i.e.\ in expectation, the accuracy of the model equals the predicted accuracy of the model, or equivalently, the error of the model equals its predicted error. 
\end{lemma}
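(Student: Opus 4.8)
The plan is to read the result directly off the two computations in the \textbf{Revisiting GDE} block above, treating the lemma as a repackaging rather than a fresh calculation. By definition, GDE asserts $\implicitE{\testerror} = \implicitE{\disrate}$. The first computation established $\implicitE{\testerror} = 1 - \implicitE{\acc}$, and the second established $\implicitE{\disrate} = 1 - \implicitE{\predacc}$, so I would simply cancel the constant term and the sign to conclude that GDE holds exactly when $\implicitE{\acc} = \implicitE{\predacc}$. This is the middle equality of the displayed chain, and it already carries the entire analytic content of the statement.

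What remains is to verify that the two outer equalities in the chain, namely $\implicitE{\acc} = \implicitE{\pof{\Ypred = \Ytrue \given X}}$ and $\implicitE{\pof{\Ypred \given X}} = \implicitE{\predacc}$, are pure identities that hold for every model irrespective of GDE. For the first, I would unfold the definition $\acc = \E{\Ytrue}{\pof{\Ypred = \Ytrue \given X} \given X}$ and apply the tower property over $X$, giving $\implicitE{\acc} = \E{X, \Ytrue}{\pof{\Ypred = \Ytrue \given X}} = \implicitE{\pof{\Ypred = \Ytrue \given X}}$. For the second, I would unfold $\predacc = \E{\Ypred}{\pof{\Ypred \given X} \given X}$ and again take the tower expectation over $X$, giving $\implicitE{\predacc} = \E{X, \Ypred}{\pof{\Ypred \given X}} = \implicitE{\pof{\Ypred \given X}}$. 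Threading these three statements together yields the four-term chain, and since the outer two equalities always hold, the chain is equivalent to its middle equality, i.e.\ to GDE.

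I do not expect a genuine obstacle here: the difficulty is notational rather than mathematical. The one point requiring care is keeping straight which random variables are bound and which are free inside the nested probability expressions, so that, for instance, $\pof{\Ypred \given X}$ is read as the marginal predictive evaluated at the free random variable $\Ypred$ rather than as a constant. I would also make sure the $\implicitE{\disrate}$ identity I am invoking rests on the two facts already justified in the excerpt, namely $\Ypred \independent \Ypred' \given X$ and $\pof{\ypred' \given x} = \pof{\ypred \given x}$. With those in hand, the proof reduces to a short assembly of identities.
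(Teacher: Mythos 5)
Your proposal is correct and matches the paper's own treatment exactly: the paper derives $\implicitE{\testerror} = 1 - \implicitE{\acc}$ and $\implicitE{\disrate} = 1 - \implicitE{\predacc}$ in the ``Revisiting GDE'' block (using $\Ypred \independent \Ypred' \given X$ and $\pof{\ypred' \given x} = \pof{\ypred \given x}$) and then presents the lemma as a direct rewriting of the GDE condition, with the outer equalities being the same tower-property identities you cite. Nothing is missing.
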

\end{importantresult}

Crucially, while \citet{jiang2021assessing} calls $1 - \E{X, \Ypred}{\pof{\Ypred \given X}}$ the (expected) disagreement rate $\implicitE{\disrate}$, it actually is just the predicted error of the (Bayesian) model as a whole. %

Equally important, all dependencies on $\W$ have vanished. Indeed, we will not use $\W$ anymore for the remainder of this section. This reproduces the corresponding remark from \citet{jiang2021assessing}\footnote{The remark did not exist in the first preprint version.}:
\begin{importantresult}
\begin{insight}
    The theoretical statements in \citet{jiang2021assessing} can be made about any discriminative model with predictions $\pof{y \given x}$.
\end{insight}
\end{importantresult}

When is \(\E{X, \Ypred}{\pof{\Ytrue = \Ypred \given X}} = \E{X, \Ypred}{\pof{\Ypred \given X}}\)? Or in other words: when does $\pof{\Ytrue = \ypred \given x}$ equal $\pof{\Ypred = \ypred \given x}$ in expectation over $\pof{x, \ytrue, \ypred}$?

As a trivial sufficient condition, when the predictive distribution matches our data distribution---\emph{i.e.\ when the model $\pof{\ypred \given x}$ is perfectly calibrated on average for all classes---and not only for the top-1 predicted class}. $\ECE = 0$ is not sufficient because the standard calibration error only ensures that the data distribution and predictive distribution match for the top-1 predicted class  \citep{nixon2019measuring}. But class-wise calibration entails this equality. 

\textbf{Class-Wise and Class-Aggregated Calibration.} %
To see this, we rewrite class-wise and class-aggregated calibration slightly by employing the following tautology:
\begin{align}
    \pof{\Ypred = \classindex \given \innerexpr{\pof{\Ypred = \classindex \given X}} = \levelthreshold} = \levelthreshold, \label{eq:magical_tautology}
\end{align}
which is obviously true due its self-referential nature. We provide a formal proof in \S\ref{app:additional_proofs} in the appendix.
Then we have the following equivalent definition:
\begin{restatable}{lemma}{lemmaccnice}
    \label{lemma:cc_nice}
    The model $\pof{\ypred \given x}$ satisfies \emph{class-wise calibration} when for any $\levelthreshold \in [0, 1]$ and any class $\classindex \in [\numclasses]$:
    \begin{align}
        \pof{\Ytrue = \classindex, \innerexpr{\pof{\Ypred = \classindex \given X}} = \levelthreshold} 
        =
        \pof{\Ypred = \classindex, \innerexpr{\pof{\Ypred = \classindex \given X}} = \levelthreshold}.
    \end{align}
    Similarly, the model $\pof{\ypred \given x}$ satisfies \emph{class-aggregated calibration} when for any $\levelthreshold \in [0, 1]$:
    \begin{align}
        \pof{\innerexpr{\pof{\Ypred = \Ytrue \given X}} = \levelthreshold} 
        =
        \pof{\innerexpr{\pof{\Ypred \given X}} = \levelthreshold},
    \end{align}
    and \emph{class-wise} calibration implies \emph{class-aggregate} calibration.
\end{restatable}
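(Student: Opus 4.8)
The plan is to obtain both equivalent definitions directly from the originals by clearing the conditioning---multiplying through by the marginal $\pof{\innerexpr{\pof{\Ypred = \classindex \given X}} = \levelthreshold}$---and invoking the self-referential tautology \eqref{eq:magical_tautology}, and then to get the implication for free by summing over classes. For the class-wise case, I start from the original conditional form $\pof{\Ytrue = \classindex \given \innerexpr{\pof{\Ypred = \classindex \given X}} = \levelthreshold} = \levelthreshold$ and multiply both sides by $\pof{\innerexpr{\pof{\Ypred = \classindex \given X}} = \levelthreshold}$, turning the conditional into a joint: $\pof{\Ytrue = \classindex, \innerexpr{\pof{\Ypred = \classindex \given X}} = \levelthreshold} = \levelthreshold \, \pof{\innerexpr{\pof{\Ypred = \classindex \given X}} = \levelthreshold}$. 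Applying the same multiplication to the tautology \eqref{eq:magical_tautology} gives $\pof{\Ypred = \classindex, \innerexpr{\pof{\Ypred = \classindex \given X}} = \levelthreshold} = \levelthreshold \, \pof{\innerexpr{\pof{\Ypred = \classindex \given X}} = \levelthreshold}$. The two right-hand sides coincide, so the left-hand sides must agree, which is precisely the claimed equivalent form.

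For the class-aggregated case, the key observation is how the two nested random variables decompose under the law of total probability. Restricted to the event $\{\Ytrue = \classindex\}$, the random variable $\innerexpr{\pof{\Ypred = \Ytrue \given X}}$ equals $\innerexpr{\pof{\Ypred = \classindex \given X}}$, so summing the disjoint contributions over $\classindex$ yields $\pof{\innerexpr{\pof{\Ypred = \Ytrue \given X}} = \levelthreshold} = \sum_\classindex \pof{\Ytrue = \classindex, \innerexpr{\pof{\Ypred = \classindex \given X}} = \levelthreshold}$, which recovers the left-hand side of the original definition. The analogous decomposition over $\Ypred$ gives $\pof{\innerexpr{\pof{\Ypred \given X}} = \levelthreshold} = \sum_\classindex \pof{\Ypred = \classindex, \innerexpr{\pof{\Ypred = \classindex \given X}} = \levelthreshold}$, and applying \eqref{eq:magical_tautology} term by term rewrites this as $\levelthreshold \sum_\classindex \pof{\innerexpr{\pof{\Ypred = \classindex \given X}} = \levelthreshold}$, the original right-hand side. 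Substituting both identities converts the original class-aggregated definition into the claimed compact form.

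Finally, the implication follows immediately: summing the per-class class-wise equalities over $\classindex \in [\numclasses]$ and inserting the two total-probability decompositions just established gives $\pof{\innerexpr{\pof{\Ypred = \Ytrue \given X}} = \levelthreshold} = \pof{\innerexpr{\pof{\Ypred \given X}} = \levelthreshold}$, i.e.\ class-aggregated calibration. I expect the only genuine obstacle to be the careful bookkeeping of the nested-random-variable notation---in particular, justifying rigorously that conditioning $\innerexpr{\pof{\Ypred = \Ytrue \given X}}$ on the event $\{\Ytrue = \classindex\}$ substitutes $\classindex$ for $\Ytrue$ inside the density---since once that is granted and the tautology \eqref{eq:magical_tautology} is available, every remaining step is elementary.
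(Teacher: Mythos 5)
Your proposal is correct and follows essentially the same route as the paper's proof: clearing the conditional, invoking the tautology \eqref{eq:magical_tautology}, and using the total-probability decomposition that ties $\classindex$ to $\Ytrue$ (resp.\ $\Ypred$) inside the nested density before marginalizing over classes. The only difference is presentational---you run the class-aggregated argument from the compact form's decomposition backward rather than from the original definition forward---and the delicate substitution step you flag is exactly the one the paper also relies on.
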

The straightforward proof is found in \S\ref{app:additional_proofs} in the appendix.

\citet{jiang2021assessing} mention `level sets' as intuition in their proof sketch. Here, we have been able to make this even clearer: class-aggregated calibration means that level-sets for accuracy $\pof{\Ypred=\Ytrue \given X}$ and predicted accuracy $\pof{\Ypred \given X}$---as random variables of $\Ytrue$ and $X$, and $\Ypred$ and $X$, respectively---have equal measure, that is probability, for all $\levelthreshold$. 

\begin{importantresult_noparbox} 
\textbf{GDE.} Now, class-aggregated calibration immediately and trivially implies GDE. To see this, we use the following property of expectations:
\begin{lemma}
    \label{lemma:basic_random_variable_property}
    For a random variable $X$, a function $t(x)$, and the random variable $T = t(X)$, it holds that
    \begin{align}
        \E{T}{T} = \implicitE{T} = \E{X}{t(X)}.
    \end{align}
\end{lemma}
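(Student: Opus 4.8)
The plan is to recognize this as the law of the unconscious statistician and to split the claim into its two equalities. The first equality $\E{T}{T} = \implicitE{T}$ is purely notational: the implicit expectation $\implicitE{T}$ abbreviates the expectation over whichever random variables appear inside it, which here is $T$ alone, so by the convention introduced for the implicit-expectation notation it coincides with $\E{T}{T}$, the expectation of $T$ against its own law. The content therefore lies in the second equality $\implicitE{T} = \E{X}{t(X)}$, which says that integrating $T = t(X)$ against its pushforward law gives the same answer as integrating $t(X)$ against the law of $X$; I would establish this by a change-of-variables argument over the level sets of $t$.

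First I would write out the defining expectation over the law of $X$, namely $\E{X}{t(X)} = \sum_x \pof{x}\, t(x)$, with an integral replacing the sum in the continuous case. Next I would partition the domain according to the value $\tau = t(x)$, grouping the sum by the level sets of $t$ to get $\E{X}{t(X)} = \sum_\tau \sum_{x\,:\,t(x) = \tau} \pof{x}\, t(x)$. On each inner sum $t(x)$ equals the constant $\tau$, so I can factor it out as $\sum_\tau \tau \sum_{x\,:\,t(x) = \tau} \pof{x}$. Finally I would recognize the inner sum as the pushforward mass $\pof{T = \tau}$, which yields $\sum_\tau \tau\, \pof{T = \tau} = \E{T}{T}$ and closes the chain back to the first equality.

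The main obstacle is bookkeeping rather than mathematics: ensuring the level-set partition $\{x : t(x) = \tau\}$ covers the domain without overlap, and — in the continuous setting — justifying the interchange of the integral with the regrouping (by Tonelli for nonnegative integrands, or by absolute integrability of $t$). In the finite-class discriminative setting of this paper these concerns are vacuous, since the relevant random variables, such as the class index and the transformed confidences $\pof{\Ypred = \classindex \given X}$, take only finitely many values, so the regrouping is a finite rearrangement and the identity is elementary.
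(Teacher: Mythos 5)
Your proof is correct and follows exactly the route the paper intends: the paper offers no formal proof of this lemma, only the remark that it ``is just a change of variable (push-forward of a measure),'' and your level-set regrouping argument is precisely that change of variables written out, with the first equality correctly identified as purely notational. Nothing is missing; your attention to the finite-range case even addresses the measure-theoretic bookkeeping the paper leaves implicit.
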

This basic property states that we can either compute an expectation over $T$ by integrating over $\pof{T=t}$ or by integrating $t(x)$ over $\pof{X=x}$. This is just a change of variable (push-forward of a measure).

We can use this property together with the class-aggregated calibration to see:
\begin{align}
    \equalto{\equalto{\equalto{\E{\levelthreshold \sim \pof{\Ypred=\Ytrue \given X}}{q}}{\implicitE{\pof{\Ypred=\Ytrue \given X}}}}{\E{X, \Ytrue}{\pof{\Ypred=\Ytrue \given X}}}}{\implicitE{\acc}} =
    \equalto{\equalto{\equalto{\E{\levelthreshold \sim \pof{\Ypred \given X}}{q}}{\implicitE{\pof{\Ypred \given X}}}}{\E{X, \Ypred}{\pof{\Ypred \given X}}}}{\implicitE{\predacc}},
    \label{eq:magic_step}
\end{align}
which is exactly \Cref{lemma:nicer_gde}, where we start with the equality following from class-aggregated calibration and then apply \Cref{lemma:basic_random_variable_property} along each side.
Thus, GDE is but an expectation over class-aggregated calibration; we have:
\begin{theorem}
    \label{thm:cac_gde_more_general}
    When a model $\pof{\ypred \given x}$ satisfies class-wise or class-aggregated calibration, it satisfies GDE.
\end{theorem}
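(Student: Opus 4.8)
The plan is to reduce everything to Lemma~\ref{lemma:nicer_gde}, which already recasts GDE as the single scalar identity $\implicitE{\acc} = \implicitE{\predacc}$, i.e.\ $\implicitE{\pof{\Ypred = \Ytrue \given X}} = \implicitE{\pof{\Ypred \given X}}$. So the whole task is to show that this identity-in-expectation follows from calibration, and the two heavy lifts have already been front-loaded into earlier lemmas: Lemma~\ref{lemma:cc_nice} rewrites calibration as an equality of pushforward distributions, and Lemma~\ref{lemma:basic_random_variable_property} lets us pass between the expectation of a transformed random variable and the expectation over its level-set value.

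First I would dispose of the class-wise case at once: Lemma~\ref{lemma:cc_nice} states that class-wise calibration implies class-aggregated calibration, so it suffices to argue under class-aggregated calibration alone. For that case I would invoke the rewritten form from Lemma~\ref{lemma:cc_nice}, namely $\pof{\pof{\Ypred = \Ytrue \given X} = \levelthreshold} = \pof{\pof{\Ypred \given X} = \levelthreshold}$ for every $\levelthreshold \in [0,1]$. The conceptual content is that the transformed random variable $\pof{\Ypred = \Ytrue \given X}$ (a function of $X$ and $\Ytrue$) and the transformed random variable $\pof{\Ypred \given X}$ (a function of $X$ and $\Ypred$) carry the same law on $[0,1]$.

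Then I would apply Lemma~\ref{lemma:basic_random_variable_property} to each side, exactly as in the chain~\eqref{eq:magic_step}: taking $T = \pof{\Ypred = \Ytrue \given X}$ gives $\implicitE{\acc} = \E{\levelthreshold \sim \pof{\Ypred = \Ytrue \given X}}{\levelthreshold}$, and taking $T = \pof{\Ypred \given X}$ gives $\implicitE{\predacc} = \E{\levelthreshold \sim \pof{\Ypred \given X}}{\levelthreshold}$. Since the two laws over $\levelthreshold$ coincide, the two $\levelthreshold$-expectations agree, so $\implicitE{\acc} = \implicitE{\predacc}$, which is precisely the hypothesis of Lemma~\ref{lemma:nicer_gde} and hence yields GDE.

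The only genuine subtlety — hardly an obstacle once the definitions are in the form of Lemma~\ref{lemma:cc_nice} — is the bookkeeping about which random variables each side is a function of: both sides must be read as expectations of a single scalar transformed random variable, so that equal level-set measures translate directly into equal expectations rather than into a statement about joint distributions. Given Lemmas~\ref{lemma:nicer_gde},~\ref{lemma:cc_nice}, and~\ref{lemma:basic_random_variable_property}, the theorem is then essentially the one-line observation that identically distributed random variables have identical means.
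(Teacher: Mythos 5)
Your proposal is correct and follows essentially the same route as the paper's own proof: reduce to \Cref{lemma:nicer_gde}, use \Cref{lemma:cc_nice} to recast class-aggregated calibration as equality of the laws of the transformed random variables $\pof{\Ypred=\Ytrue \given X}$ and $\pof{\Ypred \given X}$, and then apply \Cref{lemma:basic_random_variable_property} on each side exactly as in \cref{eq:magic_step}. The paper merely makes this explicit by naming the two auxiliary variables $S$ and $T$ and writing the two expectations as integrals against their (equal) densities.
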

\end{importantresult_noparbox}
\begin{proof}
    We can formalize the proof to be even more explicit and introduce two auxiliary random variables: 
    \begin{align}
        S \defeq \pof{\Ypred=\Ytrue \given X},
    \end{align} 
    as a transformed random variable of $\Ytrue$ and $X$, and
    \begin{align}
        T \defeq \pof{\Ypred \given X},
    \end{align}
    as a transformed random variable of $\Ypred$ and $X$. 
    Class-wise calibration implies class-aggregated calibration.
    Class-aggregated calibration then is $\pof{S = \levelthreshold} = \pof{T = \levelthreshold}$ (*).
    Writing out \cref{eq:magic_step}, we have
    \begin{align}
        \MoveEqLeft{} \implicitE{\pof{\Ypred=\Ytrue \given X}} = \E{X, \Ytrue}{S} =
        \implicitE{S} = \E{S}{S} \\
        & = \int \pof{S=q} \, q \, dq \\
        & \overset{(*)}{=} \int \pof{T=q} \, q \, dq \\
        &= \E{T}{T} = \implicitE{T} = \E{X, \Ypred}{T} = \implicitE{\pof{\Ypred \given X}},
    \end{align}
    which concludes the proof.
\end{proof}
The reader is invited to compare this derivation to the corresponding longer proof in the appendix of \citet{jiang2021assessing}.
The fully probabilistic perspective greatly simplifies the results, and the proofs are straightforward.

\textbf{CACE.} %
Showing that $\CACE$ bounds the gap between test error and disagreement is also straightforward:
\begin{theorem}
    \label{thm:cace_bound_more_general}
    For any model $\pof{\ypred \given x}$, we have:
    \begin{align*}
        \lvert \implicitE{\testerror} - \implicitE{\disrate} \rvert \le \CACE.
    \end{align*}
\end{theorem}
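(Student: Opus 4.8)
The plan is to reuse the \textbf{Revisiting GDE} computation to collapse the left-hand side to the gap between the marginal accuracy and the predicted accuracy, and then to reduce the whole inequality to a pointwise statement about the distributions of the two auxiliary random variables $S \defeq \pof{\Ypred = \Ytrue \given X}$ and $T \defeq \pof{\Ypred \given X}$ already introduced in the proof of \Cref{thm:cac_gde_more_general}.

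First I would observe that, from \textbf{Revisiting GDE}, $\implicitE{\testerror} = 1 - \implicitE{\acc}$ and $\implicitE{\disrate} = 1 - \implicitE{\predacc}$, so the additive constants cancel and $\lvert \implicitE{\testerror} - \implicitE{\disrate} \rvert = \lvert \implicitE{\acc} - \implicitE{\predacc} \rvert$. Using \Cref{lemma:basic_random_variable_property} with $S$ and $T$ (both of which take values in $[0,1]$), I rewrite $\implicitE{\acc} = \E{S}{S} = \int_{[0,1]} q \, \pof{S = q} \, dq$ and $\implicitE{\predacc} = \E{T}{T} = \int_{[0,1]} q \, \pof{T = q} \, dq$.

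The bound is then almost immediate: pulling the difference under a single integral and applying the triangle inequality gives
\[
    \lvert \implicitE{\acc} - \implicitE{\predacc} \rvert = \left\lvert \int_{[0,1]} q \, (\pof{S=q} - \pof{T=q}) \, dq \right\rvert \le \int_{[0,1]} q \, \lvert \pof{S=q} - \pof{T=q} \rvert \, dq,
\]
and since $q \le 1$ on the domain of integration, I drop the factor $q$ to obtain the upper bound $\int_{[0,1]} \lvert \pof{S=q} - \pof{T=q} \rvert \, dq$.

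The one remaining step, and the only real obstacle, is recognizing this last integral as exactly $\CACE$; this is where the tautology \eqref{eq:magical_tautology} and the rewriting in \Cref{lemma:cc_nice} do the work. Expanding $\pof{S = q} = \pfor{\pof{\Ypred = \Ytrue \given X} = q}$ over the value of $\Ytrue$ shows it equals $\sum_\classindex \pof{\Ytrue = \classindex, \pof{\Ypred = \classindex \given X} = q}$, the first term of the $\CACE$ integrand. For $T$, I would expand $\pof{T=q} = \sum_\classindex \pof{\Ypred = \classindex, \pof{\Ypred = \classindex \given X} = q}$ and apply the tautology, which replaces $\pof{\Ypred = \classindex \given \pof{\Ypred = \classindex \given X} = q}$ by $q$, yielding $q \sum_\classindex \pof{\pof{\Ypred = \classindex \given X} = q}$, the second term of the $\CACE$ integrand. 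Hence the integrand equals $\lvert \pof{S=q} - \pof{T=q} \rvert$ and the integral is $\CACE$, closing the chain of inequalities. The subtlety to get right is purely bookkeeping: $S$ is a transformed variable of $(\Ytrue, X)$ whereas $T$ is a transformed variable of $(\Ypred, X)$, so the two sums range over $\Ytrue = \classindex$ and $\Ypred = \classindex$ respectively, which is precisely what makes the two asymmetric terms of $\CACE$ appear.
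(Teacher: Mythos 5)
Your proposal is correct and follows essentially the same route as the paper's proof: the paper also identifies the $\CACE$ integrand with $\lvert \pof{S=q} - \pof{T=q}\rvert$ via \Cref{lemma:cc_nice}, inserts the factor $\levelthreshold \le 1$ by monotonicity of integration, and applies the triangle inequality for integrals. The only difference is cosmetic — you run the chain of inequalities from the left-hand side up to $\CACE$, whereas the paper starts from $\CACE$ and bounds downward.
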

\begin{proof}
First, we note that
\begin{align}
    \CACE = \int_{\levelthreshold \in [0, 1]} 
    \begin{aligned}[t]
        \bigl| &\pof{\innerexpr{\pof{\Ypred=\Ytrue \given X}} = \levelthreshold} - \pof{\innerexpr{\pof{\Ypred \given X}} = \levelthreshold} \bigr| d\levelthreshold.
    \end{aligned}
\end{align}
following the equivalences in the proof of \Cref{lemma:cc_nice}. Then using the triangle inequality for integrals and $0\le \levelthreshold \le 1$, we obtain:
\begin{align}
    & \CACE \\
    & \quad = \int_{\levelthreshold \in [0, 1]} 
    \begin{aligned}[t]
        \bigl| &\pof{\innerexpr{\pof{\Ypred=\Ytrue \given X}} = \levelthreshold} - \pof{\innerexpr{\pof{\Ypred=\Ypred \given X}} = \levelthreshold} \bigr| \, d\levelthreshold
    \end{aligned} \\
    & \quad \ge \int_{\levelthreshold \in [0, 1]} 
    \begin{aligned}[t]
        \levelthreshold \, \bigl| &\pof{\innerexpr{\pof{\Ypred=\Ytrue \given X}} = \levelthreshold} - \pof{\innerexpr{\pof{\Ypred=\Ypred \given X}} = \levelthreshold} \bigr| \, d\levelthreshold
    \end{aligned} \label{eq:cace_bound_monotonicity} \\
    & \quad \ge 
    \begin{aligned}[t]
        & \bigl| \int_{\levelthreshold \in [0, 1]} \levelthreshold \, \pof{\innerexpr{\pof{\Ypred=\Ytrue \given X}} = \levelthreshold} \, d\levelthreshold - \int_{\levelthreshold \in [0, 1]} \levelthreshold \, \pof{\innerexpr{\pof{\Ypred \given X}} = \levelthreshold} \, d\levelthreshold \bigr|. 
    \end{aligned} \label{eq:cace_bound_triangle_inequality} \\
    & \quad = \bigl| \implicitE{S} - \implicitE{T} \big| \\
    & \quad = \bigl| \implicitE{\testerror} - \implicitE{\disrate} \big|,
\end{align}
where we have used the monotonicity of integration in \eqref{eq:cace_bound_monotonicity} and the triangle inequality in \eqref{eq:cace_bound_triangle_inequality}.
\end{proof}

The bound also serves as another---even simpler---proof for \Cref{thm:cac_gde_more_general}:
\begin{importantresult}
\begin{insight}
When the Bayesian model satisfies class-wise or class-aggregated calibration, we have $\CACE=0$ and thus $\implicitE{\testerror}=\implicitE{\disrate}$, i.e.\ the model satisfies GDE.
\end{insight}
\end{importantresult}

Furthermore, note again that a Bayesian model was not necessary for the last two theorems. The model parameters $\W$ were not mentioned---except for the specific definitions of $\testerror$ and $\disrate$ which depend on $\W$ following \citet{jiang2021assessing} but which we only use in expectation. 

Moreover, we see that we can easily upper-bound $\CACE$ using the triangle inequality by $2$, narrowing the statement in \citet{jiang2021assessing} that $\CACE$ can lie anywhere in $[0, \numclasses]$:
\begin{insight}
    $\CACE \le 2.$
\end{insight}

Additionally, for completeness, we can also define the class-wise calibration error formally and show that it is bounded by $\CACE$ using the triangle inequality:
\begin{definition}
    The \emph{class-wise calibration error (CWCE)} is defined as:
    \begin{align}
        \CWCE \defeq \sum_\classindex \int_{\levelthreshold \in [0, 1]} & 
        \begin{aligned}[t]
            \bigl| &\pof{\Ytrue = \classindex, \innerexpr{\pof{\Ypred=\classindex \given X}} = \levelthreshold} - \pof{\Ypred = \classindex, \innerexpr{\pof{\Ypred = \classindex \given X}} = \levelthreshold} \bigr|.
        \end{aligned}
    \end{align}
\end{definition}
\begin{lemma}
    $\CWCE \ge \CACE \ge | \implicitE{\acc} - \implicitE{\predacc} |.$
\end{lemma}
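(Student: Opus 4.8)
The plan is to split the chain into its two inequalities, $\CWCE \ge \CACE$ and $\CACE \ge |\implicitE{\acc} - \implicitE{\predacc}|$, and to dispatch each by an elementary manipulation. The right-hand inequality is essentially already established: \Cref{thm:cace_bound_more_general} gives $|\implicitE{\testerror} - \implicitE{\disrate}| \le \CACE$, while the computations in the ``Revisiting GDE'' paragraph show $\implicitE{\testerror} = 1 - \implicitE{\acc}$ and $\implicitE{\disrate} = 1 - \implicitE{\predacc}$. Substituting these, the two constants cancel and $|\implicitE{\testerror} - \implicitE{\disrate}| = |\implicitE{\acc} - \implicitE{\predacc}|$, so the bound transfers verbatim and no new work is needed on this side.

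For the left-hand inequality I would first rewrite $\CACE$ into a per-class sum-of-differences form that lines up termwise with the integrand of $\CWCE$. The tautology in \eqref{eq:magical_tautology} turns the weighted term into a joint probability, $\levelthreshold \, \pof{\innerexpr{\pof{\Ypred = \classindex \given X}} = \levelthreshold} = \pof{\Ypred = \classindex, \innerexpr{\pof{\Ypred = \classindex \given X}} = \levelthreshold}$, for each class $\classindex$. Writing $a_\classindex(\levelthreshold) \defeq \pof{\Ytrue = \classindex, \innerexpr{\pof{\Ypred = \classindex \given X}} = \levelthreshold} - \pof{\Ypred = \classindex, \innerexpr{\pof{\Ypred = \classindex \given X}} = \levelthreshold}$, this recasts $\CACE$ as the integral of the absolute value of a single finite sum:
\begin{align}
    \CACE = \int_{\levelthreshold \in [0,1]} \Bigl| \sum_\classindex a_\classindex(\levelthreshold) \Bigr| \, d\levelthreshold.
\end{align}
Applying the triangle inequality to this inner finite sum pointwise in $\levelthreshold$, namely $\bigl|\sum_\classindex a_\classindex(\levelthreshold)\bigr| \le \sum_\classindex |a_\classindex(\levelthreshold)|$, and then integrating and exchanging the finite sum with the integral by linearity recovers $\sum_\classindex \int_{\levelthreshold \in [0,1]} |a_\classindex(\levelthreshold)| \, d\levelthreshold = \CWCE$, which is exactly $\CWCE \ge \CACE$.

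There is no genuine obstacle here; the only step requiring care is the rewriting of $\CACE$ via \eqref{eq:magical_tautology}, where one must verify that the $\levelthreshold$-weighted sum collapses into the per-class joint probabilities so that the two errors become comparable term by term. Once that rewriting is in place, both inequalities are immediate consequences of the triangle inequality (applied inside the integrand for the left bound, and across $\levelthreshold$ as in \Cref{thm:cace_bound_more_general} for the right bound), so the result follows without any further estimates.
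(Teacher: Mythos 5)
Your proposal is correct and matches the argument the paper intends: the paper offers no explicit proof, only the remark that $\CWCE \ge \CACE$ follows from the triangle inequality (exactly your pointwise bound $\bigl|\sum_\classindex a_\classindex(\levelthreshold)\bigr| \le \sum_\classindex |a_\classindex(\levelthreshold)|$ after rewriting via \eqref{eq:magical_tautology}), while $\CACE \ge |\implicitE{\acc} - \implicitE{\predacc}|$ is \Cref{thm:cace_bound_more_general} combined with the identities $\implicitE{\testerror} = 1 - \implicitE{\acc}$ and $\implicitE{\disrate} = 1 - \implicitE{\predacc}$ from the ``Revisiting GDE'' paragraph. No gaps; your write-up is simply a more explicit version of the same route.
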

Note that when we compute $\CACE$ empirically, we divide the dataset into several bins for different intervals of $\pof{\Ypred=\classindex \given X}$. \citet{jiang2021assessing} use 15 bins. If we were to use a single bin, we would compute $| \implicitE{\acc} - \implicitE{\predacc} |$ directly.

\begin{importantresult}
In \S\ref{app:robustness_metrics} we show that $\CWCE$ has previously been introduced as `adaptive calibration error' in \citet{nixon2019measuring} and $\CACE$ as `static calibration error' (with noteworthy differences between \citet{nixon2019measuring} and its implementation).
\end{importantresult}

\begin{figure}
    \centering
    \begin{subfigure}{\linewidth}
        \vspace{-0.4em}
        \centering
        \includegraphics[width=\linewidth,trim={5cm 0 0 11.5cm},clip]{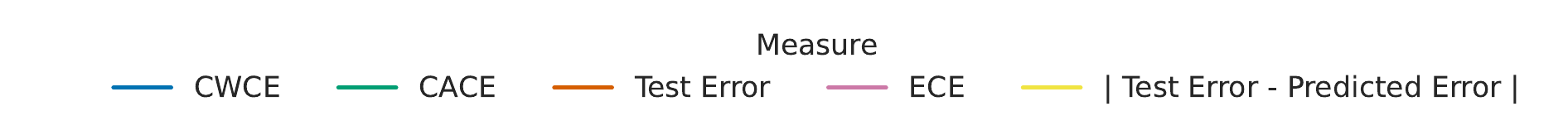} 
    \end{subfigure}
    \begin{subfigure}{\linewidth}
        \vspace{0.1em}
        \caption*{\textbf{CIFAR-10 (In-Distribution)}}
        \vspace{0.1em}
        \begin{subfigure}{0.52\linewidth}
            \centering
            \includegraphics[width=\linewidth,trim={0 1.5cm 11.5cm 0},clip]{plots/cifar10_test_rejection_plot_disrate.pdf}
        \end{subfigure}
        \begin{subfigure}{0.46\linewidth}
            \centering
            \includegraphics[width=\linewidth,trim={13cm 1.5cm 0 0},clip]{plots/cifar10_test_rejection_plot_disrate.pdf}
        \end{subfigure}
    \end{subfigure}
    \begin{subfigure}{\linewidth}
        \vspace{0.1em}
        \caption*{\textbf{CINIC-10 (Distribution Shift)}}
        \vspace{0.1em}
        \begin{subfigure}{0.52\linewidth}
            \centering
            \includegraphics[width=\linewidth,trim={0 1.5cm 11.5cm 0},clip]{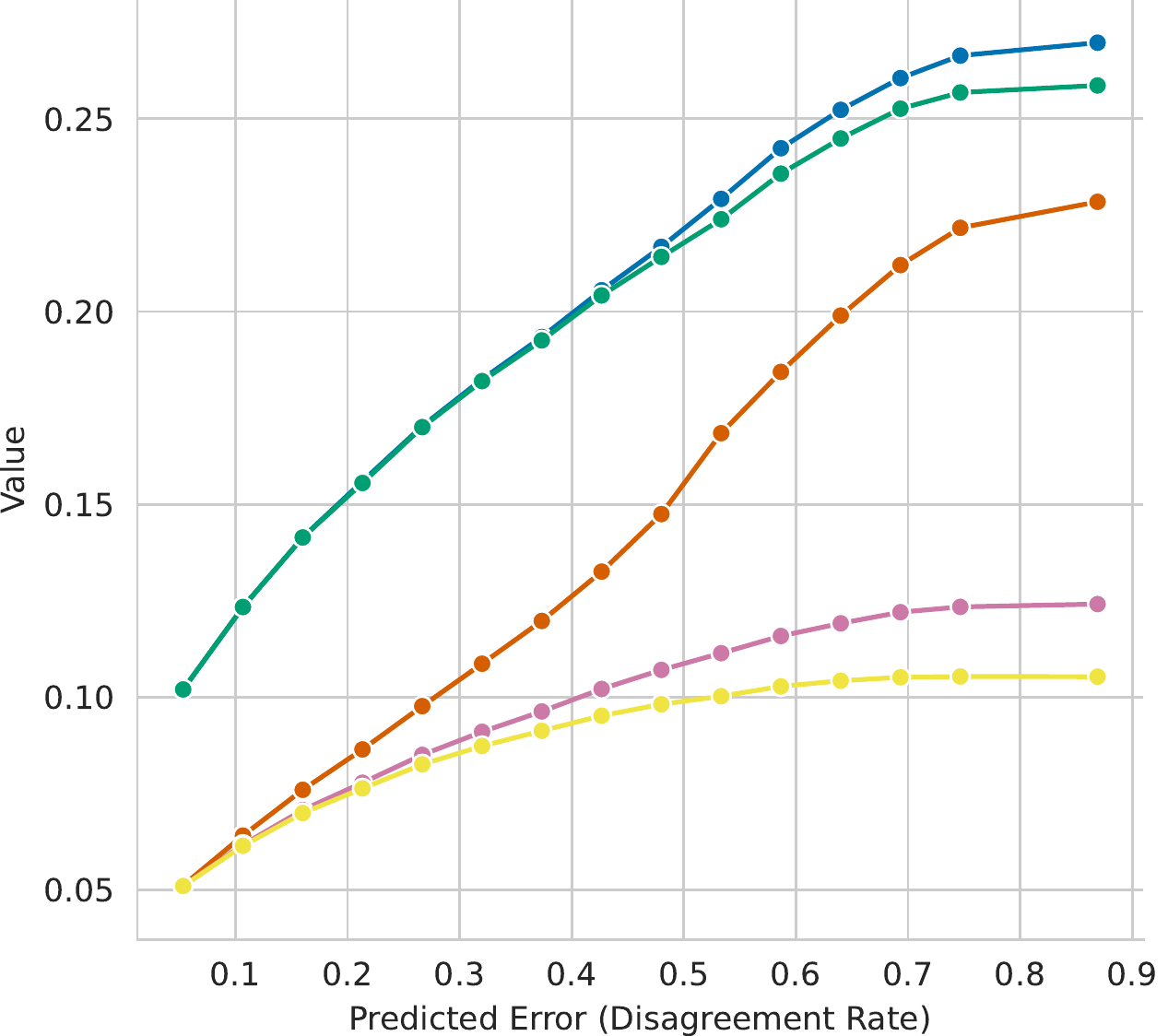}
            \subcaption{Ensemble with TOP}    
        \end{subfigure}
        \begin{subfigure}{0.46\linewidth}
            \centering
            \includegraphics[width=\linewidth,trim={13cm 1.5cm 0 0},clip]{plots/cinic10_test_rejection_plot_disrate.pdf}
            \subcaption{Ensemble without TOP (same underlying models)}    
        \end{subfigure}
    \end{subfigure}
    \caption{\emph{Rejection Plot of Calibration Metrics for Increasing Disagreement In-Distribution (CIFAR-10) and Under Distribution Shift (CINIC-10).} Different calibration metrics ($\ECE$, $\CWCE$, $\CACE$) vary across CIFAR-10 and CINIC-10 on an ensemble of 25 Wide-ResNet-28-10 model trained on CIFAR-10, depending on the rejection threshold of the predicted error (disagreement rate). Thus, calibration cannot be assumed constant for in-distribution data or under distribution shift. The test error increases almost linearly with the predicted error (disagreement rate), 
    leading to `GDE gap' $| \text{Test Error} - \text{Predicted Error} |$ becoming almost flat, providing evidence for the empirical observations in \citet{nakkiran2020distributional, jiang2021assessing}. The mean predicted error (disagreement rate) is shown on the x-axis. \textbf{(a)} shows results for an ensemble using TOP (following \citet{jiang2021assessing}), and \textbf{(b)} for a regular deep ensemble without TOP. The regular deep ensemble is better calibrated but has higher test error overall and lower test error for samples with small predicted error.
    }
    \label{fig:rejection_plot}
\end{figure}

\section{Deterioration of Calibration under Increasing Disagreement}
\label{sec:empirical_example}

Generally, we can only hope to trust model calibration for in-distribution data, while under distribution shift, the calibration ought to deteriorate. %
In our empirical falsification using models trained on CIFAR-10 and evaluated on the test sets of CIFAR-10 and CINIC-10, as a dataset with a distribution shift, we find in both cases that calibration deteriorates under increasing disagreement. 
We further examine ImageNET and PACS in \S\ref{appsec:additional_results}.
Most importantly though, calibration markedly worsens under distribution shift. 

Specifically, we examine an ensemble of 25 WideResNet models \citep{zagoruyko2016wide} trained on CIFAR-10 \citep{krizhevsky2009learning} and evaluated on CIFAR-10 and CINIC-10 test data. CINIC-10 \citep{darlow2018cinic} consists of CIFAR-10 and downscaled ImageNet samples for the same classes, and thus includes a distribution shift. 
The training setup follows the one described in \citet{mukhoti2021deterministic}, see appendix \S\ref{appsec:experiment_setup}.

\Cref{fig:rejection_plot} shows rejection plots under increasing disagreement for in-distribution data (CIFAR-10) and under distribution shift (CINIC-10).
The rejection plots threshold the dataset on increasing levels of the predicted error (disagreement rate)---which is a measure of epistemic uncertainty when there is no expected aleatoric uncertainty in the dataset. %
We examine ECE, class-aggregated calibration error ($\CACE$), class-wise calibration error ($\CWCE$), error $\implicitE{\testerror}$, and `GDE gap', $| \implicitE{\acc} - \implicitE{\predacc} |$, as the predicted error (disagreement rate), $\implicitE{\disrate} = 1 - \implicitE{\predacc}$,  increases.
We observe that all calibration metrics, ECE, CACE and CWCE, deteriorate under increasing disagreement, both in distribution and under distribution shift, and also worsen under distribution shift overall.

We also observe the same for ImageNet \citep{deng2009imagenet} and PACS \citep{li2017deeper}, which we show in appendix \S\ref{appsec:additional_results}.

This is consistent with the experimental results of \citet{ovadia2019can} which examines dataset shifts.
However, given that the calibration metrics change with the quantity of interest, we conclude that: 

\begin{importantresult}
\begin{insight}
    The bound from \Cref{thm:cace_bound} might not have as much expressive power as hoped since the calibration metrics themselves deteriorate as the model becomes more `uncertain' about the data.
\end{insight}
\end{importantresult}

At the same time, the `GDE gap', which is the actual gap between test error and predicted error, flattens, and the test error develops an almost linear relationship with the predicted error (up to a bias). This shows that there seem to be intriguing empirical properties of deep ensemble as observed previously \citep{nakkiran2020distributional,jiang2021assessing}. However, they are not explained by the proposed calibration metrics\footnote{The simplest explanation is that very few samples have high predicted error and thus the rejection plots flatten. This is not true. For CINIC-10, the first bucket contains 50k samples, and each latter buckets adds additional $\sim$10k samples.}.

\forreviewers{The simplest explanation is that very few samples have high predicted error and thus the rejection plots flatten. This is not true. The first mark contains 50k samples, each latter buckets adds additional 10k samples.

In DDU, we plotted expected softmax variance (y) vs information gain (x) and saw a cup-shaped curve. If we compute the metrics by bucketing instead of a rejection plot, we would see the same cup-shaped curve with calibration (y) vs predicted error (x). (The plots are not included yet.)
Calibration is good for both ID samples and very OoD samples. The in-between is where the model's predictions are unreliable and calibration deteriorates.
}

As described previously, the results are not limited to Bayesian or version-space models but also apply to any model $\pof{\ypred \given x}$, including a regular deep ensembles without TOP. In our experiment, we find that a regular deep ensemble is better calibrated than the same ensemble made to satisfy TOP. 
We hypothesize that each ensemble member's own predictive distribution is better calibrated than its one-hot outputs, yielding a better calibrated ensemble overall.

Given that all these calibration metrics require access to the labels, and we cannot assume the model to be calibrated under distribution shift, we might just as well use the labels directly to asses the test error.

\andreas{Additional experiments: show that predicted error, like uncertainty, is not useful for epistemic uncertainty when we have datasets with ambiguous samples. Ie I can use predicted error as epistemic uncertainty proxy because for ID the predicted error is 0, ie. analogous to entropy vs MI.}

\section{Discussion}
\label{sec:discussion}

Here, we discuss connections to Bayesian model disagreement and epistemic uncertainty, as well as connections to information theory. %
We expand on these points in much greater detail in appendix \S\ref{appsec:discussion}.

\textbf{Bayesian Model Disagreement.} From a Bayesian perspective, as the epistemic uncertainty increases, we expect the model to become less reliable in its predictions.
The predicted error of the model is a measure of the model's overall uncertainty, which is the total of aleatoric and epistemic uncertainty and thus correlated with epistemic uncertainty.
Thus, we can hypothesize that as the predicted error increases, the model should become less reliable, which will be reflected in increasing calibration metrics. 
This is exactly what we have empirically validated in the previous section.

\textbf{Connection to Information Theory.}
At first sight, \citet{jiang2021assessing} seems disconnected from information theory. However, we can draw a connection by using $\aICof{p} \defeq 1 - p$ as a linear approximation for Shannon's information content $\ICof{p}=-\log p$.
Semantically, both this approximation and Shannon's information content quantify surprise (i.e., prediction error). Both are $0$ for certain events. For unlikely events, the former tends to $1$ while the latter tends to $+\infty$.

This leads to common-sense definitions and statements from an information-theoretic point of view. We can even formulate parallel statements using information theory and see that the statements relate to total uncertainty and not epistemic uncertainty in a Bayesian sense.

\textbf{Other Related Literature.} %
Beyond \citet{jiang2021assessing}, this note offers a perspective on \citet{granese2021doctor} and \citet{garg2022leveraging}, which are proposing related approaches.

\citet{granese2021doctor} use the predicted error (disagreement rate), referred to as $D_\alpha$, and the predicted top-1 error, $D_\beta$, to estimate when the model will be wrong. 
As noted in \S\ref{appsec:discussion}, the predicted error can be seen as an approximation of Shannon's entropy.
Thus, $D_\alpha$ is effectively using an approximation of the prediction entropy for OOD detection and rejection classification. Similarly, $D_\beta$ is the maximum class confidence. Both are well-known baselines for OOD detection \citep{hendrycks2016baseline}. There is no ablation to see how $D_\alpha$ and $D_\beta$ differ from these baselines. We leave this to future work.
The paper frames the question of whether a model's predictions will be correct as binary classification problem on top of the underlying model's output probabilities and investigate this from a theoretical point of view. They also examine using input perturbations similar to \citet{liang2017enhancing} and \citet{lee2018simple}.

\citet{garg2022leveraging} threshold the predictive entropy or maximum class confidence to estimate the test error under distribution shift. They estimate the threshold by calibrating it on in-distribution labelled data: the threshold is chosen such that the percentage of rejected in-distribution validation data approximately equals the test error on this in-distribution data. They call this approach \emph{Average Thresholded Confidence (ATC)}. They find that ATC using entropy performs better than ATC using the maximum confidence and other approaches, including GDE. Their results show that ATC also degrades under increasing distribution shifts similar to what we have seen for GDE in \S\ref{sec:empirical_example} as the choice of threshold is explicitly tied to the in-distribution\footnote{See also Figures 7, 8, and 9 in the appendix of \citet{garg2022leveraging}}.
\citet{garg2022leveraging} explicitly examine the theoretical limits when no further assumptions are made.

\section{Conclusion}
\label{sec:conclusion}

We have found that the theoretical statements in \citet{jiang2021assessing} can be expressed and proven more concisely when using probabilistic notation for (Bayesian) models that output softmax probabilities.

Moreover, we empirically found the proposed calibration metrics to deteriorate under increasing disagreement for in-distribution data, and as expected, we have found the same behavior under distribution shifts.

While \citet{jiang2021assessing} are careful to qualify their results for distribution shifts, above results should give us pause: strong assumptions are still needed to conjecture about model generalization, and we need to beware of circular arguments.

\FloatBarrier

\section*{Acknowledgements}

The authors would like to thank the anonymous TMLR reviewers for their kind, constructive, and helpful feedback during the review process. Moreover, we would like to thank Jannik Kossen, Joost van Amersfoort, and the members of OATML in general for their feedback at various stages of the project, and the authors for \citet{jiang2021assessing} for constructively engaging with a preprint of this work. AK is supported by the UK EPSRC CDT in Autonomous Intelligent Machines and Systems (grant reference EP/L015897/1).

\bibliographystyle{tmlr}
\bibliography{references}

\clearpage
\appendix

\section{Equivalent Definitions}
\label{app:equivalent_definitions}
{
\andreas{add quotation marks around the original equations that follow different notation etc.}

\newcommand{\versionspace}{\mathcal{H}}
\newcommand{\trainingalgorithm}{\mathcal{A}}
\citet{jiang2021assessing} defines a \emph{hypothesis space} $\versionspace$. In the literature, this is also sometimes called a version space. The hypothesis space induced by a stochastic training algorithm $\trainingalgorithm$ is named $\versionspace_\trainingalgorithm$. 

We can identify each hypothesis $h: \mathcal{X} \rightarrow[\numclasses]$ with itself as parameter $\w_h = h$ and define $\pof{\w_h}$ as a uniform distribution over all parameters/hypotheses in $\versionspace_\trainingalgorithm$. This has the advantage of formalizing the distribution from which hypothesis are drawn ($h \sim \versionspace_\trainingalgorithm$), which is not made explicit in \citet{jiang2021assessing}. $h(x) = \classindex$ then becomes $\argmax_{\ypred} {\pof{\ypred \given x, \w_h}} = \classindex$. Moreover, as $\pof{\ypred, \w \given x}$ satisfies TOP, we have\footnote{We put definitions and expressions written using the notation and variables from \citet{jiang2021assessing} inside quotation marks ``'' to avoid ambiguities.} 
\begin{align}
    ``\indicator{h(x) = \classindex}'' &= \pof{\Ypred = \classindex \given x, \w_h}.
\end{align}

Thus, ``$\operatorname{TestErr}_{\mathscr{D}}(h) \triangleq \mathbb{E}_{\mathscr{D}}[\mathbb{1}[h(X) \neq Y]]$'' is equivalent to:
\begin{align}
    ``\operatorname{TestErr}_{\mathscr{D}}(h) &=  \mathbb{E}_{\mathscr{D}}[\mathbb{1}[h(X) \neq Y]]'' \\
    &= \E{X, \Ytrue}{\pof{\Ypred \not= \Ytrue \given X, \w_h}} \\
    &= \E{X}{\pfor{\Ypred \not= \Ytrue \given X, \w_h}} \\
    &= \pfor{\Ypred \not= \Ytrue \given \w_h} \\
    &= \testerror(\w_h).
\end{align}

Similarly, ``$\operatorname{Dis}_{\mathscr{D}}\left(h, h^{\prime}\right) \triangleq \mathbb{E}_{\mathscr{D}}\left[\mathbb{1}\left[h(X) \neq h^{\prime}(X)\right]\right]$'' is equivalent to:
\begin{align}
    ``\operatorname{Dis}_{\mathscr{D}}\left(h, h^{\prime}\right) &\triangleq \mathbb{E}_{\mathscr{D}}\left[\mathbb{1}\left[h(X) \neq h^{\prime}(X)\right]\right]'' \\
    &= \E{\Ypred, \Ypred'}{\pfor{\Ypred \not= \Ypred' \given \w_h, \w_{h'}}} \\
    &= \disrate(\w_h, \w_{h'}).
\end{align}

Further, ``$\tilde{h}_{k}(x) \triangleq \mathbb{E}_{\mathscr{H}_{\mathcal{A}}}[\mathbb{1}[h(x)=k]]$'' is equivalent to:
\begin{align}
    ``\tilde{h}_{k}(x) &\triangleq \mathbb{E}_{\mathscr{H}_{\mathcal{A}}}[\mathbb{1}[h(x)=k]]'' \\
    &= \E{\W}{\pof{\Ypred = \classindex \given x, \W}} \\
    &= \pof{\Ypred = \classindex \given x}. 
\end{align}

For the GDE, ``$\mathbb{E}_{h, h^{\prime} \sim \mathscr{H}_{\mathcal{A}}}\left[\operatorname{Dis}_{\mathscr{D}}\left(h, h^{\prime}\right)\right]=\mathbb{E}_{h \sim \mathscr{H}_{\mathcal{A}}}\left[\operatorname{TestErr}_{\mathscr{g}}(h)\right]$'' is equivalent to:
\begin{align}
    & ``\mathbb{E}_{h, h^{\prime} \sim \mathscr{H}_{\mathcal{A}}}\left[\operatorname{Dis}_{\mathscr{D}}\left(h, h^{\prime}\right)\right]=\mathbb{E}_{h \sim \mathscr{H}_{\mathcal{A}}}\left[\operatorname{TestErr}_{\mathscr{g}}(h)\right]'' \notag \\
    \Leftrightarrow & \E{\W, \W'}{\disrate(\W, \W')} = \E{\W}{\testerror(\W)}.
\end{align}

For the class-wise calibration, ``$p(Y=k \mid \tilde{h}_{k}(X)=q)=q$'' is equivalent to:
\begin{align}
    & ``p(Y=k \mid \tilde{h}_{k}(X)=q)=q'' \\
    \Leftrightarrow & \pof{\Ytrue = \classindex \given \innerexpr{\pof{\Ypred = \classindex \given X}} = q} = q.
\end{align}

For the class-aggregated calibration, ``$\frac{\sum_{k=0}^{K-1} p(Y=k, \tilde{h}_{k}(X)=q)}{\sum_{k=0}^{K-1} p(\tilde{h}_{k}(X)=q)}=q$'' (and note in \citet{jiang2021assessing}, class indices run from $0..K-1$) is equivalent to:
\begin{align}
    &``\frac{\sum_{k=0}^{K-1} p(Y=k, \tilde{h}_{k}(X)=q)}{\sum_{k=0}^{K-1} p(\tilde{h}_{k}(X)=q)}=q'' \\
    \Leftrightarrow &\frac{\sum_{\classindex=1}^{\numclasses} \pof{\Ytrue=k, \innerexpr{\pof{\Ypred = \classindex \given X}} = q}}{\sum_{\classindex=1}^\numclasses \pof{\innerexpr{\pof{\Ypred = \classindex \given X }} = q}} = q \\
    \Leftrightarrow & \sum_{\classindex=1}^{\numclasses} \pof{\Ytrue=k, \innerexpr{\pof{\Ypred = \classindex \given X}} = q} \notag \\
    & \quad = q \sum_{\classindex=1}^\numclasses \pof{\innerexpr{\pof{\Ypred = \classindex \given X }} = q}.
\end{align}

Finally, for the class-aggregated calibration error, the definition is equivalent to:
\begin{align}
    \MoveEqLeft{} ``\operatorname{CACE}_{\mathscr{D}}(\tilde{h}) \notag \\
    & \triangleq \int_{q \in[0,1]}\left|\frac{\sum_{k} p\left(Y=k, \tilde{h}_{k}(X)=q\right)}{\sum_{k} p\left(\tilde{h}_{k}(X)=q\right)}-q\right| \cdot \sum_{k} p\left(\tilde{h}_{k}(X)=q\right) d q \\
    & = \int_{q \in[0,1]}\bigl|\sum_{k} p\left(Y=k, \tilde{h}_{k}(X)=q\right) -q \sum_{k} p\left(\tilde{h}_{k}(X)=q\right) \bigl| d q'' \\
    & = \int_{q \in[0,1]}\bigl|\sum_{k} \pof{\Ytrue=\classindex, \innerexpr{\pof{\Ypred =\classindex \given X}}=q} -q \sum_{k} \pof{\innerexpr{\pof{\Ypred =\classindex \given X}}=q} \bigl| d q
\end{align}

\section{Comparison of $\CACE$ and $\CWCE$ with calibration metrics with `adaptive calibration error' and `static calibration error'}
\label{app:robustness_metrics}

\citet{nixon2019measuring} examine shortcomings of the ECE metric and identify a lack of class conditionality, adaptivity and the focus on the maximum probability (argmax class) as issues.
They suggest an adaptive calibration error which uses adaptive binning and averages of the calibration error separately for each class, thus equivalent to the class-wise calibration error and class-wise calibration (up to adaptive vs.\ even binning). In the paper, the static calibration error is defined as ACE with even instead of adaptive binning. However, in the widely used implementation\footnote{\url{https://github.com/google-research/robustness_metrics/blob/baa47fbe38f80913590545fe7c199898f9aff349/robustness_metrics/metrics/uncertainty.py\#L1585}, added in April 2021}, SCE is defined as equivalent to the class-aggregated calibration error.

\section{Expanded Discussion}
\label{appsec:discussion}

Here, we discuss connections to Bayesian model disagreement and epistemic uncertainty, as well as connections to information theory, the bias-variance trade-off, and prior literature.

\subsection{Bayesian Model Disagreement}

From a Bayesian perspective, as the epistemic uncertainty increases, we expect the model to become less reliable in its predictions. The predicted error of the model is a measure of the overall uncertainty of the model which is the total of aleatoric and epistemic uncertainty, and thus correlated with epistemic uncertainty. Thus, we can hypothesize that as the predicted error increases, the model should become less reliable, which will be reflected in increasing calibration metrics. This is what we have empirically validated in the previous section.
We can expand on the connection to the Bayesian perspective. %
In particular, we can connect the statements of \citet{jiang2021assessing} to a well-known Bayesian measure of model disagreement. 

\textbf{Information Theory.} We follow notation introduced in \citet{kirsch2021practical}. In particular, for entropy, we use an implicit or explicit notation, \(\Hof{X}\) or \(\xHof{\pof{X}}\), and use a notation for the cross-entropy \(\CrossEntropy{\opp}{\opq}\) which is follows the Kullback-Leibler divergence \(\Kale{\opp}{\opq}\). We base these definitions on Shannon's information content $\ICof{p}$:
\begin{align}
    \ICof{p} &\defeq -\log p, \\
    \CrossEntropy{\pof{X}}{\qof{X}} &\defeq  \E{\pof{x}}{\ICof{\qof{x}}}, \\
    \Hof{X} \defeq \xHof{\pof{X}} &\defeq \CrossEntropy{\pof{X}}{\pof{X}}, \\
    \Kale{\pof{X}}{\qof{X}} &\defeq \CrossEntropy{\pof{X}}{\qof{X}} - \xHof{\pof{X}},
\end{align}
where $\opp$ and \(\opq\) are probabilities distributions. Conditional and joint entropies are defined as usual. For completeness, for random variables $X$ and $Y$:
\begin{align}
    \Hof{X \given Y} \defeq \E{\pof{x, y}}{-\log \pof{x \given y}}.
\end{align}
Finally, the mutual information \(\MIof{X ; Y}\) for random variables $X$, $Y$ is defined as expected uncertainty reduction---also sometimes called expected information gain \citep{lindley1956measure}:
\begin{align}
    \MIof{X ; Y} = \Hof{X} - \Hof{X \given Y},
\end{align}
as the entropy $\Hof{X}$ can be seen as quantifying the 'uncertainty' about the random variable \(X\), and $\Hof{X \given Y}$ as expected uncertainty about $X$ after observing $Y$.

\textbf{Bayesian Model Disagreement.} %
The mutual information \(\MIof{\Ypred ; \W \given x}\) between predictions \(\Ypred\) and model parameters \(\W\) for a given sample \(x\) is a well-known quantity that measures the model disagreement for that sample.
It can be computed without having to perform Bayesian inference:
\begin{align}
    \MIof{\Ypred ; \W \given x} &= \Hof{\Ypred \given x} - \Hof{\Ypred \given x, \W} \label{eq:BALD} \\
    &= \simpleE{\Ypred, X}
    \begin{aligned}[t]
        \bigl[ &-\log {\pof{\Ypred \given X}} \\
        &- \E{\W}{-\log \pof{\Ypred \given X ,\W}}
        \bigr]
    \end{aligned}
\end{align}
To see that $\MIof{\Ypred ; \W \given x}$ measures model disagreement, observe that if each $\w$ was equally likely to obtain a different one-hot class prediction $\Ypred$, the second term would be $0$, while the first term $\Hof{\Ypred \given x}$ would be maximal $= \log \numclasses$ as the predictive distribution would be uniform.
Equally, if the predictions for each $\w$ agreed, we would have \(\pof{\ypred \given x} = \pof{\ypred \given x, \w}\) for all \(\w\), the first and second term would be equal, and the mutual information would be $0$ \citep{kirsch2019batchbald}.

Model disagreement is a proxy of epistemic uncertainty. Epistemic uncertainty measures model-dependent uncertainty (i.e.\ reliability) and is also known as \emph{reducible} uncertainty: if we were to update the model with a label for $x$ using Bayesian inference, the updated model's uncertainty for $x$ would reduce. For this reason, $\MIof{\Ypred ; \W \given x}$ is often used in Bayesian active learning, where it is known as \emph{BALD (Bayesian Active Learning by Disagreement)} \citep{houlsby2011bayesian}, or in Bayesian optimal experiment design, where it is known as \emph{Expected Information Gain} \citep{lindley1956measure}. In non-Bayesian active learning, the very same term is known as \emph{Query-by-Committee} \citep{McCallum1998EmployingEA} and computed via a Kullback-Leibler divergence:
\begin{align}
    \MIof{\Ypred ; \W \given x} &= \Hof{\Ypred \given x} - \Hof{\Ypred \given x, \W} \notag \\
    &= \Kale{\pof{\Ypred \given x, \W}}{\pof{\Ypred \given x}}.
\end{align}
Taking an expectation, we obtain the expected model disagreement over the data $\MIof{\Ypred ; \W \given X} = \E{X}{\MIof{\Ypred ; \W \given X}}$.

In \S\ref{appsec:additional_results}, we also report empirical results for rejection plots based on Bayesian model disagreement instead of predicted error.

\subsection{Connection to Information Theory}
\label{sec:special_case}

At first sight, \citet{jiang2021assessing} seems  disconnected from information theory. However, we can recover statements by using $\aICof{p} \defeq 1 - p$ as a linear approximation for Shannon's information content $\ICof{p}$:
\begin{align}
    \aICof{p} = 1 - p \le -\log p = \ICof{p}.
\end{align} 
$\aICof{p}$ is just the first-order Taylor expansion of $\ICof{p} = -\log p$ around 1.
Semantically, both Shannon's information content and this approximation quantify surprise. Both are $0$ for certain events. For unlikely events, the former tends to $+\infty$ while the latter tends to $1$. 

We can define an \emph{approximate entropy} $\aHof{X}$ using $\opInformationContent'$:
\begin{align}
    \aHof{X} \defeq \implicitE{\aICof{\pof{X}}}=1 - \implicitE{\pof{x}} = 1 - \sum_x \pof{x}^2,
\end{align}
and an \emph{approximate mutual information} $\aMIof{X ; Y}$:
\begin{align}
    \aMIof{X ; Y} \defeq \aHof{X} - \aHof{X \given Y} = \aHof{X} - \simpleE{\pof{y}} \aHof{X \given y},
\end{align}
following the semantic notion of mutual information as expected information gain in \S\ref{sec:background}.

\textbf{$\aMIof{\Ypred ; \W \given x}$ as Covariance Trace.} This approximate mutual information has a surprisingly nice property, which was detailed in \citet{smith2018understanding} originally:
\begin{restatable}{proposition}{approxmivarsum}
    The approximate mutual information $\aMIof{\Ypred ; \W \given x}$ is equal the sum of the variances of $\ypred \given x, \W$ over all $\ypred$:
    \begin{align}
        \aMIof{\Ypred ; \W \given x} = \sum_{\ypred=1}^K \Var{\W}{\pof{\ypred \given x,\W}} \ge 0.
    \end{align} 
\end{restatable}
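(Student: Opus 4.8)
The plan is to prove the identity by unfolding the definition of the approximate mutual information $\aMIof{\Ypred ; \W \given x}$ into approximate entropies and then recognising the resulting per-class expression as a variance via the elementary identity $\Var{}{Z} = \E{}{Z^2} - (\E{}{Z})^2$. The only ingredients needed are the definition $\aHof{X} = 1 - \sum_x \pof{x}^2$, the marginalisation $\pof{\ypred \given x} = \E{\W}{\pof{\ypred \given x, \W}}$, and this variance decomposition; no Bayesian inference or further assumptions enter.

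First I would write, directly from the definition of approximate mutual information applied with $X = \Ypred$ conditioned on $x$, that $\aMIof{\Ypred ; \W \given x} = \aHof{\Ypred \given x} - \E{\W}{\aHof{\Ypred \given x, \W}}$. Substituting $\aHof{\Ypred \given x} = 1 - \sum_{\ypred} \pof{\ypred \given x}^2$ and $\aHof{\Ypred \given x, \W} = 1 - \sum_{\ypred} \pof{\ypred \given x, \W}^2$ (the latter inside the expectation over $\W$), the two constant terms cancel, and after exchanging the finite sum over classes with the expectation over $\W$ one is left with
\[
    \aMIof{\Ypred ; \W \given x} = \sum_{\ypred} \left( \E{\W}{\pof{\ypred \given x, \W}^2} - \pof{\ypred \given x}^2 \right).
\]

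Second, I would use the marginal relation $\pof{\ypred \given x} = \E{\W}{\pof{\ypred \given x, \W}}$ to rewrite $\pof{\ypred \given x}^2 = \bigl(\E{\W}{\pof{\ypred \given x, \W}}\bigr)^2$, so that each summand becomes $\E{\W}{\pof{\ypred \given x, \W}^2} - \bigl(\E{\W}{\pof{\ypred \given x, \W}}\bigr)^2 = \Var{\W}{\pof{\ypred \given x, \W}}$, i.e.\ the variance identity applied to $\pof{\ypred \given x, \W}$ viewed as a transformed random variable of $\W$ for fixed $x$ and $\ypred$. Summing over the $K$ classes yields the claimed equality, and nonnegativity is immediate since each summand is a variance.

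The computation is entirely routine and there is no substantial obstacle; the only point demanding care is bookkeeping. I must keep track of what is held fixed (the input $x$ and the class index $\ypred$) versus what the expectation and variance range over (the parameters $\W$), and in particular expand the marginal predictive distribution consistently as an expectation over $\W$ so that the squared marginal matches $\bigl(\E{\W}{\cdot}\bigr)^2$ rather than $\E{\W}{(\cdot)^2}$ — the difference between these two being precisely the variance. Once this is tracked, the proof reduces to one application of $\Var{}{Z} = \E{}{Z^2} - (\E{}{Z})^2$ per class.
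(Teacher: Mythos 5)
Your proposal is correct and follows essentially the same route as the paper's proof: expand both approximate entropies via $\aHof{\cdot} = 1 - \sum_{\ypred} \pof{\ypred \given \cdot}^2$, cancel the constants, swap the finite class sum with $\E{\W}{\cdot}$, and recognise each summand as $\E{\W}{\pof{\ypred \given x, \W}^2} - \E{\W}{\pof{\ypred \given x, \W}}^2 = \Var{\W}{\pof{\ypred \given x, \W}}$. The only cosmetic difference is that the paper writes the entropies as expectations over $\Ypred$ before converting to sums of squares, whereas you start directly from the sum-of-squares form.
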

We present a proof in \S\ref{app:additional_proofs} in the appendix.
The sum of variances of the predictive probabilities (or trace of the respective covariance matrix) is a common proxy for epistemic uncertainty \citep{gal2017deep}, and here the mutual information $\aMIof{\Ypred ; \W \given x}$ using $\hat \opInformationContent$ is just that. This gives evidence that these definitions are sensible and connects them to other prior Bayesian literature. 
Importantly, this also shows that $\aHof{\Ypred \given x} \ge \aHof{\Ypred \given x, \W}$.

\textbf{Connection to \citet{jiang2021assessing}.} %
As random variable of $X$ and $Y$, $\aHof{\Ypred=\Ytrue \given X}$ is the test error:
\begin{align}
    \aHof{\Ypred=\Ytrue \given X} = 1 - \pof{\Ypred=\Ytrue \given X} = \testerror.
\end{align} 
Thus, the approximate cross-entropy 
\begin{align}
    \MoveEqLeft{} \aCrossEntropy{\pof{\Ytrue \given X}}{\pof{\Ypred = \Ytrue \given X}} = \E{\pof{\Ytrue \given X}}{\aICof{\pof{\Ypred = \Ytrue \given X}}}
\end{align}
is the expected test error $\implicitE{\testerror}$.

Similarly, when TOP is fulfilled, the mutual information $\aMIof{\Ypred ; \W \given X}$ is the expected disagreement rate $\implicitE{\disrate}$. That is, when $\Ypred \given X, \W$ is one-hot, we have:
\begin{align}
    \aHof{\Ypred \given X, \W} = 1 - \simpleE{X} \underbrace{\E{\Ypred} {\pof{\Ypred \given X, \W} \given X}}_{=1} = 0.
\end{align}
and thus:
\begin{align}
    \aMIof{\Ypred ; \W \given X} &= \aHof{\Ypred \given X} - \aHof{\Ypred \given X, \W} \\
    &= \aHof{\Ypred \given X} \\
    &= 1 - \E{X, \Ypred}{\pof{\Ypred \given X}} \\
    &= \implicitE{\disrate}.
\end{align}
\begin{lemma}
    When the model $\pof{\ypred \given x, \w}$ satisfies TOP, the GDE is equivalent to:
    \begin{align}
        \aCrossEntropy{\pof{\Ytrue \given X}}{\pof{\Ypred = \Ytrue \given X}} 
        = \aMIof{\Ypred ; \W \given X}.
    \end{align}
\end{lemma}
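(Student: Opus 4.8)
The plan is to recognize that both sides of the claimed equation have effectively already been computed in the paragraphs immediately preceding the lemma, so the proof reduces to matching them against the two halves of the GDE definition. First I would handle the left-hand side without invoking any assumption: by the definition of the approximate cross-entropy, $\aCrossEntropy{\pof{\Ytrue \given X}}{\pof{\Ypred = \Ytrue \given X}} = \E{\pof{\Ytrue \given X}}{\aICof{\pof{\Ypred = \Ytrue \given X}}}$, and since $\aICof{p} = 1 - p$ gives the pointwise identity $\aHof{\Ypred = \Ytrue \given X} = 1 - \pof{\Ypred = \Ytrue \given X} = \testerror$, taking the expectation over $X$ and $\Ytrue$ yields exactly $\implicitE{\testerror}$.

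Next I would handle the right-hand side, which is where TOP becomes essential. I would start from the decomposition $\aMIof{\Ypred ; \W \given X} = \aHof{\Ypred \given X} - \aHof{\Ypred \given X, \W}$ and argue that TOP forces the conditional term to vanish: when $\pof{\ypred \given x, \w}$ is one-hot, the inner sum $\E{\Ypred}{\pof{\Ypred \given X, \W} \given X, \W}$ equals $1$, so $\aHof{\Ypred \given X, \W} = 1 - 1 = 0$. Consequently $\aMIof{\Ypred ; \W \given X} = \aHof{\Ypred \given X} = 1 - \E{X, \Ypred}{\pof{\Ypred \given X}}$, which I would identify with $\implicitE{\disrate}$ using the expression for the expected disagreement rate established in \S\ref{sec:rederivation}.

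Finally I would conclude by appealing to the definition of GDE. Because the left-hand side equals $\implicitE{\testerror}$ unconditionally and the right-hand side equals $\implicitE{\disrate}$ under TOP, the claimed equation holds if and only if $\implicitE{\testerror} = \implicitE{\disrate}$, which is precisely the statement of GDE. This gives the equivalence in both directions simultaneously, with no separate ``only if'' argument needed.

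The only step demanding genuine care, rather than substitution of already-derived identities, is verifying that the one-hot property collapses $\aHof{\Ypred \given X, \W}$ to zero. This is the sole place where TOP is indispensable, and it is also where the specific choice $\aICof{p} = 1 - p$ --- whose value is zero on a certain event --- interacts with the one-hot assumption to eliminate the conditional term. Everything else is bookkeeping with the definitions of $\aICof{\cdot}$, $\aHof{\cdot}$, and the two marginal accuracies.
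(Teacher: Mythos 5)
Your proposal is correct and follows essentially the same route as the paper: identify the approximate cross-entropy with $\implicitE{\testerror}$, use TOP to collapse $\aHof{\Ypred \given X, \W}$ to zero (since a one-hot distribution has $\sum_{\ypred} \pof{\ypred \given x, \w}^2 = 1$) so that $\aMIof{\Ypred ; \W \given X} = \aHof{\Ypred \given X} = \implicitE{\disrate}$, and then read off the equivalence with GDE by definition. The paper presents exactly these computations in the text immediately preceding the lemma, so nothing further is needed.
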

This relates the approximate cross-entropy loss (test error) to the approximate Bayesian model disagreement.

\textbf{Without TOP.} If TOP does not hold, the \emph{actual} expected disagreement $\aMIof{\Ypred ; \W \given x}$ lower-bounds the 
``expected disagreement rate'' $\implicitE{\disrate}$, which then equals the expected \emph{predicted} error $1 - \E{X, \Ypred}{\pof{\Ypred \given X}}$ when we have GDE. We have the following general equivalence to GDE:
\begin{lemma}
    For a model $\pof{\ypred \given x}$, the GDE is equivalent to:
    \begin{align}
        \aCrossEntropy{\pof{\Ytrue \given X}}{\pof{\Ypred = \Ytrue \given X}} 
        = \aHof{\Ypred \given X} \ge \aMIof{\Ypred ; \W \given X}.
    \end{align}
\end{lemma}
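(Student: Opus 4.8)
The plan is to prove the two assertions separately: first the equivalence between GDE and the equality $\aCrossEntropy{\pof{\Ytrue \given X}}{\pof{\Ypred = \Ytrue \given X}} = \aHof{\Ypred \given X}$, and then the unconditional inequality $\aHof{\Ypred \given X} \ge \aMIof{\Ypred ; \W \given X}$, which holds for every model irrespective of GDE or TOP. Both pieces reduce to identifying each term with a quantity already computed in \Cref{sec:rederivation}.

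For the equivalence I would identify the outer two terms. The left-hand side was already shown in this subsection to equal $\implicitE{\testerror}$: unfolding $\aICof{p} = 1 - p$ gives $\aCrossEntropy{\pof{\Ytrue \given X}}{\pof{\Ypred = \Ytrue \given X}} = \E{\pof{\Ytrue \given X}}{1 - \pof{\Ypred = \Ytrue \given X}} = 1 - \acc$ as a random variable of $X$, and the outer expectation yields $1 - \implicitE{\acc} = \implicitE{\testerror}$. For the middle term I would expand the approximate conditional entropy directly, using $\sum_{\ypred} \pof{\ypred \given x}^2 = \predacc$, to get $\aHof{\Ypred \given X} = 1 - \implicitE{\predacc}$; by the \emph{Revisiting GDE} computation this is exactly $\implicitE{\disrate}$, an identity that needs no TOP assumption since $\implicitE{\disrate} = 1 - \implicitE{\predacc}$ holds in general. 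Hence the claimed equality reads $\implicitE{\testerror} = \implicitE{\disrate}$, which is the definition of GDE (equivalently \Cref{lemma:nicer_gde}), so the equivalence follows immediately.

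For the inequality I would use the defining decomposition $\aMIof{\Ypred ; \W \given X} = \aHof{\Ypred \given X} - \aHof{\Ypred \given X, \W}$ and note $\aHof{\Ypred \given X, \W} \ge 0$. This is immediate from $\aHof{\Ypred \given x, \w} = 1 - \sum_{\ypred} \pof{\ypred \given x, \w}^2 \ge 0$, since $\sum_{\ypred} \pof{\ypred \given x, \w}^2 \le \sum_{\ypred} \pof{\ypred \given x, \w} = 1$; equivalently it follows from the proposition identifying $\aMIof{\Ypred ; \W \given X}$ with the nonnegative trace $\sum_{\ypred} \Var{\W}{\pof{\ypred \given X, \W}}$, recorded just above, which also yields the remark $\aHof{\Ypred \given X} \ge \aHof{\Ypred \given X, \W}$.

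I expect no genuine obstacle; the content is entirely bookkeeping once the two identifications are in place. The only point demanding care is conceptual rather than technical: without TOP the middle quantity $\aHof{\Ypred \given X}$ still equals the \emph{predicted} error $\implicitE{\disrate}$, but it no longer coincides with the \emph{actual} Bayesian model disagreement $\aMIof{\Ypred ; \W \given X}$. The slack is precisely the expected within-model term $\aHof{\Ypred \given X, \W}$, which vanishes exactly when each $\pof{\ypred \given x, \w}$ is one-hot, recovering the equality of the preceding TOP lemma. I would state this gap explicitly so the inequality is read as the non-TOP generalization rather than a mere loosening.
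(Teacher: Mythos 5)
Your proposal is correct and follows exactly the route the paper intends: the paper states this lemma without an explicit proof, relying on the immediately preceding identifications $\aCrossEntropy{\pof{\Ytrue \given X}}{\pof{\Ypred = \Ytrue \given X}} = \implicitE{\testerror}$ and $\aHof{\Ypred \given X} = 1 - \implicitE{\predacc} = \implicitE{\disrate}$ (the latter holding without TOP, as shown in the \emph{Revisiting GDE} computation), together with $\aHof{\Ypred \given X, \W} \ge 0$ for the inequality. Your write-up simply makes that bookkeeping explicit, including the correct observation that the slack in the inequality is exactly the within-model term that vanishes under TOP.
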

The other statements and proofs translate likewise, and intuitively seem sensible from an information-theoretic perspective. We can go further and directly establish analogous properties using information theory in the next subsection.

\subsection{Information-Theoretic Version}
\label{sec:info_theoretic_version}

Here, we derive an information-theoretic version of the GDE both under the assumption of TOP and without. Importantly, we will not require a Bayesian model for any of the main statements as they hold for any model $\pof{\ypred \given x}$. We show that we can artificially introduce a connection to disagreement using TOP. %

\textbf{Information-Theoretic GDE.} We have already introduced the BALD equation \cref{eq:BALD}, which connects expected disagreement and predictive uncertainty:
\begin{align*}
    \MIof{\Ypred ; \W \given x} &= \Hof{\Ypred \given x} - \Hof{\Ypred \given x, \W}
\end{align*}
The expected disagreement is measured by the mutual information $\MIof{\Ypred ; \W \given X}$, and the prediction error is measured by the cross-entropy of the predictive distribution under the true data generating distribution $\CrossEntropy{\pof{\Ytrue \given X}}{ \pof{\Ypred = \Ytrue \given X} }$. Indeed, the test error is bounded by it \citep{kirsch2020unpacking}: 
\begin{align}
    \pof{\Ytrue \not= \Ypred} \le 1-e^{-\CrossEntropy{\pof{\Ytrue \given X}}{ \pof{\Ypred = \Ytrue \given X} }}.
\end{align}

When our model fulfills TOP, we have $\Hof{\Ypred \given X, \W} = 0$, and thus $\MIof{\Ypred ; \W \given X} = \Hof{\Ypred \given X}$. The expected disagreement then equals the predicted label uncertainty $\Hof{\Ypred \given X}$. Generally, we can define an `entropic GDE':
\begin{definition}
    A model $\pof{\ypred \given x}$ satisfies entropic GDE, when:
    \begin{align}
        \CrossEntropy{\pof{\Ytrue \given X}}{ \pof{\Ypred = \Ytrue \given X} } = \Hof{\Ypred \given X}.
    \end{align}
\end{definition}
\begin{lemma}
    When a Bayesian model $\pof{\ypred, \w \given x}$ satisfies TOP, entropic GDE is equivalent to
    \begin{align}
        \CrossEntropy{\pof{\Ytrue \given X}}{ \pof{\Ypred = \Ytrue \given X} } = \MIof{\Ypred ; \W \given X}.
    \end{align}
\end{lemma}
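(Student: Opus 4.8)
The plan is to reduce the claimed equivalence to the already-stated \emph{entropic GDE} definition by showing that, under TOP, the mutual information $\MIof{\Ypred ; \W \given X}$ coincides with the predictive entropy $\Hof{\Ypred \given X}$. Since entropic GDE is defined as $\CrossEntropy{\pof{\Ytrue \given X}}{\pof{\Ypred = \Ytrue \given X}} = \Hof{\Ypred \given X}$, it then suffices to replace $\Hof{\Ypred \given X}$ by $\MIof{\Ypred ; \W \given X}$ on the right-hand side, and the two identities become literally the same statement, in both directions.

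First I would invoke the BALD decomposition already recorded in \eqref{eq:BALD}, taken in expectation over $X$:
\begin{align}
    \MIof{\Ypred ; \W \given X} = \Hof{\Ypred \given X} - \Hof{\Ypred \given X, \W}.
\end{align}
Second, I would argue that TOP forces the conditional entropy term to vanish: because $\pof{\ypred \given x, \w}$ is one-hot for every $x$ and $\w$, it is a degenerate distribution placing all mass on a single class, so its Shannon entropy is $0$ pointwise; taking the expectation over $\pof{x, \w}$ preserves this, giving $\Hof{\Ypred \given X, \W} = 0$. This mirrors exactly the computation used earlier in the excerpt to show $\aHof{\Ypred \given X, \W} = 0$ for the approximate entropy, only now with the exact Shannon entropy. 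Substituting into the decomposition yields $\MIof{\Ypred ; \W \given X} = \Hof{\Ypred \given X}$, and plugging this into the definition of entropic GDE gives the equivalence.

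The main obstacle here is essentially nonexistent: the only fact requiring care is that a one-hot (deterministic) conditional distribution has zero entropy, which is immediate from the definition of $\Hof{\Ypred \given X, \W}$ as an expectation of $-\log \pof{\ypred \given x, \w}$ over a point mass (with the usual convention $0 \log 0 = 0$). Everything else is a substitution of standard information-theoretic identities, so the content of the lemma lies entirely in recognising that TOP is precisely the condition under which Bayesian model disagreement $\MIof{\Ypred ; \W \given X}$ and predictive label uncertainty $\Hof{\Ypred \given X}$ agree.
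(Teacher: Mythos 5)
Your proposal is correct and matches the paper's own (implicit) argument exactly: the text preceding the lemma derives $\Hof{\Ypred \given X, \W} = 0$ from TOP and concludes $\MIof{\Ypred ; \W \given X} = \Hof{\Ypred \given X}$ via the BALD identity, after which the lemma is a direct substitution into the definition of entropic GDE. Your added care about the $0 \log 0 = 0$ convention and the parallel to the $\aHof{\Ypred \given X, \W} = 0$ computation is a reasonable elaboration, not a deviation.
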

The latter is close to GDE, especially when comparing to the previous section.

We can formulate an entropic class-aggregated calibration by connecting $\Hof{\ypred \given x}$ with $\Hof{\ytrue \given x}$, where we define $\Hof{\ypred \given x} \defeq \ICof{\pof{\ypred \given x}} = -\log \pof{\ypred \given x}$ \citep{kirsch2021practical}. That is, instead of using probabilities, we use Shannon's information-content:
\begin{definition}
    The model $\pof{\ypred \given x}$ satisfies \emph{entropic class-aggregated calibration} when for any $\levelthreshold \ge 0$:
    \begin{align}
        \pof{\Hof{\Ypred = \Ytrue \given X} = \levelthreshold} 
        =
        \pof{\Hof{\Ypred = \Ypred \given X} = \levelthreshold}.
    \end{align}
    Similarly, we can define the \emph{entropic class-aggregated calibration error (ECACE)}:
    \begin{align}
        \ECACE \defeq \int_{\levelthreshold \in [0, \infty)} 
        \begin{aligned}[t]
            \bigl| &\pof{\Hof{\Ypred=\Ytrue \given X} = \levelthreshold} \\
            & - \pof{\Hof{\Ypred = \Ypred \given X} = \levelthreshold} \bigr| d\levelthreshold.
        \end{aligned}
    \end{align}
\end{definition}
As $- \log{p}$ is strictly monotonic and thus invertible for non-negative $p$, entropic class-aggregated calibration and class-aggregated calibration are equivalent. $\ECACE$ and $\CACE$ are not, though.

The expectation of the transformed random variable $\Hof{\Ypred = \Ytrue \given X}$ (in $\Ytrue$ and $X$) is just the cross-entropy:
\begin{align}
    \simpleE{X, \Ytrue} \Hof{\Ypred = \Ytrue \given X} = \simpleE{\pof{x, \ytrue}} \Hof{\Ypred = \ytrue \given X} = \CrossEntropy{\pof{\Ytrue \given X}}{\pof{\Ypred = \Ytrue \given X}}.
\end{align} 

Using this notation, and analogous to \Cref{thm:cace_bound}, we can show:
\begin{theorem}
    \label{thm:ecace_bound}
    When $\Hof{\ypred \given x} = -\log \pof{\ypred \given x}$ is upper-bounded by $L$ for all $\ypred$ and $x$, we have:
    \begin{align}
        \ECACE 
        &\ge \frac{1}{L} \bigl| \CrossEntropy{\pof{\Ytrue \given X}}{\pof{\Ypred = \Ytrue \given X}} - \Hof{\Ypred \given X} \bigr|, \\
    \intertext{and when the model satisfies TOP, equivalently:}
        &= \frac{1}{L} \bigl| \CrossEntropy{\pof{\Ytrue \given X}}{\pof{\Ypred = \Ytrue \given X}} - \MIof{\Ypred ; \W \given X} \bigr|.
    \end{align}
\end{theorem}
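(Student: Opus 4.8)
The plan is to mirror the proof of \Cref{thm:cace_bound_more_general} essentially verbatim, replacing probabilities by their information contents and tracking the single change that the now-unbounded threshold domain forces on the argument. First I would introduce the two transformed random variables
\begin{align}
    S \defeq \Hof{\Ypred = \Ytrue \given X}, \qquad T \defeq \Hof{\Ypred \given X},
\end{align}
regarded as functions of $(\Ytrue, X)$ and $(\Ypred, X)$ respectively, i.e.\ the entropic analogues of the variables used in \Cref{thm:cac_gde_more_general}. Following the same rewriting that yields \Cref{lemma:cc_nice}, entropic class-aggregated calibration reads $\pof{S = \levelthreshold} = \pof{T = \levelthreshold}$ and the error becomes
\begin{align}
    \ECACE = \int_{\levelthreshold \in [0, \infty)} \bigl| \pof{S = \levelthreshold} - \pof{T = \levelthreshold} \bigr| \, d\levelthreshold.
\end{align}
By \Cref{lemma:basic_random_variable_property} (change of variables), the two expectations are exactly the quantities appearing in the statement: $\implicitE{S} = \E{X, \Ytrue}{\Hof{\Ypred = \Ytrue \given X}} = \CrossEntropy{\pof{\Ytrue \given X}}{\pof{\Ypred = \Ytrue \given X}}$ and $\implicitE{T} = \E{X, \Ypred}{\Hof{\Ypred \given X}} = \Hof{\Ypred \given X}$.

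The one new ingredient is the boundedness hypothesis. Since $-\log \pof{\ypred \given x} \in [0, L]$ for every $\ypred$ and $x$, both $S$ and $T$ take values in $[0, L]$, so their push-forward measures vanish outside $[0, L]$ and the integral restricts there. On $[0, L]$ we have $\levelthreshold / L \le 1$, so inserting this factor can only shrink the integrand; monotonicity of integration followed by the triangle inequality for integrals gives
\begin{align}
    \ECACE &= \int_{\levelthreshold \in [0, L]} \bigl| \pof{S = \levelthreshold} - \pof{T = \levelthreshold} \bigr| \, d\levelthreshold \\
    &\ge \frac{1}{L} \int_{\levelthreshold \in [0, L]} \levelthreshold \, \bigl| \pof{S = \levelthreshold} - \pof{T = \levelthreshold} \bigr| \, d\levelthreshold \\
    &\ge \frac{1}{L} \bigl| \implicitE{S} - \implicitE{T} \bigr| = \frac{1}{L} \bigl| \CrossEntropy{\pof{\Ytrue \given X}}{\pof{\Ypred = \Ytrue \given X}} - \Hof{\Ypred \given X} \bigr|,
\end{align}
which is the claimed inequality. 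This step is precisely the analogue of the $0 \le \levelthreshold \le 1$ comparison in \Cref{thm:cace_bound_more_general}, except that truncating the a priori unbounded support down to $[0, L]$ is what produces the $1/L$ prefactor instead of the bare constant $1$.

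Finally, for the TOP case I would invoke $\Hof{\Ypred \given X, \W} = 0$ (established in the excerpt for one-hot $\pof{\ypred \given x, \w}$), so that $\MIof{\Ypred ; \W \given X} = \Hof{\Ypred \given X} - \Hof{\Ypred \given X, \W} = \Hof{\Ypred \given X}$; substituting this into the bound upgrades it to the stated equality form. The main obstacle is conceptual rather than computational: one must verify that the finiteness of $L$ is genuinely used, i.e.\ that the supports of both transformed variables lie in $[0, L]$ so that the truncation and the $\levelthreshold / L \le 1$ comparison are legitimate. Once that is in place, everything else is a transcription of the CACE argument.
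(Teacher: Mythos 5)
Your proposal is correct and matches the paper's intended argument exactly: the paper itself only remarks that ``the proof of \Cref{thm:cace_bound} is the same, except that we use $\levelthreshold \le L$ instead of $\levelthreshold \le 1$,'' which is precisely the substitution you carry out, together with the same identification of $\implicitE{S}$ and $\implicitE{T}$ with the cross-entropy and $\Hof{\Ypred \given X}$ and the same use of $\Hof{\Ypred \given X, \W} = 0$ under TOP. Your explicit justification that the supports of $S$ and $T$ lie in $[0,L]$ is a point the paper leaves implicit, so the proposal is if anything slightly more careful.
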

There might be better conditions than the upper-bound above, but this bound is in the spirit of \citet{jiang2021assessing}. Indeed, the proof of \Cref{thm:cace_bound} is the same, except that we use $\levelthreshold \le L$ instead of $\levelthreshold \le 1$. 
Finally, when the model satisfies entropic class-aggregated calibration, $\ECACE = 0$, cross-entropy (or negative expected log likelihood) equals disagreement (respectively, predicted label uncertainty when TOP does not hold). Thus, we have:
\begin{theorem}
    When the model $\pof{\ypred \given x}$ satisfies \emph{entropic class-aggregated calibration}, it trivially also satisfies entropic GDE:
    \begin{align}
        \CrossEntropy{\pof{\Ytrue \given X}}{\pof{\Ypred = \Ytrue \given X}} = \Hof{\Ypred \given X} \ge \MIof{\Ypred ; \W \given X},
    \end{align}
    and when TOP holds:
    \begin{align}
        \CrossEntropy{\pof{\Ytrue \given X}}{\pof{\Ypred = \Ytrue \given X}} = \MIof{\Ypred ; \W \given X}.
    \end{align}
\end{theorem}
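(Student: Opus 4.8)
The plan is to reuse the argument from \Cref{thm:cac_gde_more_general} essentially verbatim, now applied to the information contents rather than to the bare probabilities. First I would introduce the two transformed random variables
\begin{align}
    S' \defeq \Hof{\Ypred = \Ytrue \given X} = -\log \pof{\Ypred = \Ytrue \given X},
\end{align}
a transformed random variable of $\Ytrue$ and $X$, and
\begin{align}
    T' \defeq \Hof{\Ypred \given X} = -\log \pof{\Ypred \given X},
\end{align}
a transformed random variable of $\Ypred$ and $X$. By construction, entropic class-aggregated calibration is precisely the statement $\pof{S' = \levelthreshold} = \pof{T' = \levelthreshold}$ for all $\levelthreshold \ge 0$.

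The second step is to recognize both sides of the entropic GDE as expectations of these variables: as already noted just above the theorem, $\implicitE{S'} = \E{X, \Ytrue}{-\log \pof{\Ypred = \Ytrue \given X}} = \CrossEntropy{\pof{\Ytrue \given X}}{\pof{\Ypred = \Ytrue \given X}}$, while $\implicitE{T'} = \E{X, \Ypred}{-\log \pof{\Ypred \given X}} = \Hof{\Ypred \given X}$ is the conditional entropy. Applying \Cref{lemma:basic_random_variable_property} to each variable rewrites these as $\implicitE{S'} = \int_{\levelthreshold \in [0, \infty)} \pof{S' = \levelthreshold} \, \levelthreshold \, d\levelthreshold$ and $\implicitE{T'} = \int_{\levelthreshold \in [0, \infty)} \pof{T' = \levelthreshold} \, \levelthreshold \, d\levelthreshold$. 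Since entropic class-aggregated calibration equates the two integrands pointwise, the integrals agree and $\implicitE{S'} = \implicitE{T'}$, which is exactly entropic GDE.

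For the inequality and the TOP specialization I would invoke the BALD decomposition \cref{eq:BALD}, $\MIof{\Ypred ; \W \given X} = \Hof{\Ypred \given X} - \Hof{\Ypred \given X, \W}$. Non-negativity of the conditional entropy $\Hof{\Ypred \given X, \W} \ge 0$ gives $\Hof{\Ypred \given X} \ge \MIof{\Ypred ; \W \given X}$, and under TOP each $\pof{\ypred \given x, \w}$ is one-hot, so $\Hof{\Ypred \given X, \W} = 0$ and the inequality collapses to the equality $\CrossEntropy{\pof{\Ytrue \given X}}{\pof{\Ypred = \Ytrue \given X}} = \MIof{\Ypred ; \W \given X}$.

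I do not expect a genuine obstacle, since the content duplicates the already-established \Cref{thm:cac_gde_more_general}; the only points needing care are bookkeeping. I must check that composing with $-\log$ does not disturb the equality of level-set measures---it does not, because $-\log$ is a fixed invertible reparametrization, the same observation used above to argue that entropic and ordinary class-aggregated calibration coincide---and that the integration range now extends to $[0, \infty)$ rather than $[0, 1]$, since the information content is unbounded. With these two remarks the transformed-variable expectation argument carries over unchanged.
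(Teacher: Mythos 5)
Your proof is correct. The only substantive difference from the paper is the route: the paper obtains this theorem as an immediate corollary of \Cref{thm:ecace_bound}, observing that entropic class-aggregated calibration forces $\ECACE = 0$ and hence $\bigl| \CrossEntropy{\pof{\Ytrue \given X}}{\pof{\Ypred = \Ytrue \given X}} - \Hof{\Ypred \given X} \bigr| = 0$, whereas you replay the primary level-set argument of \Cref{thm:cac_gde_more_general} directly on the transformed variables $S'$ and $T'$ via \Cref{lemma:basic_random_variable_property}. The two are essentially the same calculation packaged differently; your version has the small advantage of not routing through \Cref{thm:ecace_bound}, whose statement carries the extra hypothesis that $-\log \pof{\ypred \given x}$ is uniformly bounded by some $L$ (you only need the two expectations to exist, since equal level-set measures give equal integrals directly). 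Your handling of the remaining claims---the inequality from the BALD decomposition plus non-negativity of the discrete conditional entropy, and the collapse to equality under TOP because $\Hof{\Ypred \given X, \W} = 0$ for one-hot members---matches the reasoning the paper establishes in the surrounding text. You are also right to work with $S'$ and $T'$ rather than trying to reduce to the ordinary GDE via the equivalence of entropic and ordinary class-aggregated calibration: that equivalence concerns the calibration conditions, not the expectations of $-\log p$ versus $p$, so the direct argument on the transformed variables is the correct move.
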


\textbf{Without TOP.} Again, if we do not expect one-hot predictions for our ensemble members, the analogy put forward in \citet{jiang2021assessing} breaks down because the Bayesian disagreement $\MIof{\Ypred ; \W \given X}$ only lower bounds the predicted label uncertainty $\Hof{\Ypred \given X}$ and can not be connected to $\ECACE$ the same way. But this also breaks down in the regular version in \citet{jiang2021assessing}.

\begin{draft}
\subsection{Connection to the Decomposition of Proper Scores}

Here, we connect the work of \citet{jiang2021assessing} to more general results is via scoring rules. In particular, we show that they represent a special case of a decomposition introduced by \citet{brocker2009}.

A scoring rule is essentially a loss function. TK cite intro to scoring rules? TK 
Scoring rules generalize concepts like Shannon's information content, entropy, cross-entropy, and Kullback-Leibler divergence:
To avoid new notation\footnote{This notation limits the scoring rules we can express, but it is perfect for this exposition.}, and hint at what we will come, we will use $\aICof{p}$ to denote a \emph{scoring rule}. Indeed, both $1-p$ and $-\log p$ can be used as scoring rules.
We also define a \emph{scoring function} as analogue to the cross-entropy as: $\aCrossEntropy{\pof{X}}{\qof{X}}=\simpleE{\pof{x}} \aICof{\qof{x}}.$
Finally, we can define a generalized entropy and divergence using the scoring function:
\begin{align}
    \axHof{\pof{X}} &= \aCrossEntropy{\pof{X}}{\pof{X}}, \\
    \aKale{\pof{X}}{\qof{X}} &= \aCrossEntropy{\pof{X}}{\qof{X}} - \axHof{\pof{X}}.
\end{align}
Using these definitions, we recover the approximate information-theoretic quantities from \S\ref{sec:special_case} or the regular information-theoretic quantities, depending on the scoring function. 

Finally, a score is called proper when $\aKale{\opp}{\oqq} \ge 0$ always, and strictly proper when $\aKale{\opp}{\oqq} = 0 \implies \opp\overset{=}{D}\opq$

To define a notion of calibration, we will use a score decomposition adapted from \citet{brocker2009} to decompose the score into predicted score (loss), resolution, and calibration. In contrast, \citet{brocker2009} provide a decomposition into noise, resolution, and reliability. Such decompositions are well-known tools of analysis in machine learning. Another well-known decomposition is the bias-variance trade-off.

Assuming we have a random variable $Q$ that we use as calibration metric, then we have \citep{brocker2009}:
\begin{align}
    \MoveEqLeft \overbrace{
    \aCrossEntropy{\pof{\Ypred \given X}}{\pof{\Ytrue = \Ypred \given X}}}^{\text{Score (Loss)}}
    = \\
    &= \underbrace{\axHof{\pof{\Ypred \given X}}}_{\text{Predicted Score (Predicted Loss)}} 
    - \underbrace{\aKale{\pof{\Ypred \given X, Q}}{\pof{\Ypred \given X}}}_{\text{Resolution}}
    + \underbrace{\aKale{\pof{\Ypred \given X, Q}}{\pof{\Ytrue = \Ypred \given X}}}_{\text{Calibration}}.
\end{align}
\begin{proof}
    We provider a shorter proof using the definitions.
    We start by noting that the resolution and reliability divergences add and subtract the same scoring function term $\axHof{\pof{\Ypred \given X, Q}}$.
    Hence, we have:
    \begin{align}
        \MoveEqLeft {\axHof{\pof{\Ypred \given X}}} 
    - {\aCrossEntropy{\pof{\Ypred \given X, Q}}{\pof{\Ypred \given X}}}
    + {\aCrossEntropy{\pof{\Ypred \given X, Q}}{\pof{\Ytrue = \Ypred \given X}}} = \\
    &= {\axHof{\pof{\Ypred \given X}}}
    - \underbrace{\E{\pof{\ypred, x, q}}{\aICof{\pof{\ypred \given x}}}}_{=\E{\pof{\ypred, x}}{\aICof{\pof{\ypred \given x}}}=\axHof{\pof{\Ypred \given X}}}
    + \underbrace{\E{\pof{\ypred, x, q}}{\aICof{\pof{\Ytrue = \ypred \given x}}}}_{=\E{\pof{\ypred, x}}{\aICof{\pof{\Ytrue = \ypred \given x}}}}\\
    &=\aCrossEntropy{\pof{\Ypred \given X}}{\pof{\Ytrue = \Ypred \given X}}.
    \end{align}
\end{proof}
Compared to the definition in \citet{brocker2009}, we swap $\Ytrue$ and $\Ypred$. For calibration metrics, the reverse divergence above is more useful because we commonly only have access to sampled ground-truth data and not $\pof{\ytrue \given x}$ itself, which makes quantization by $Q$ difficult.

\textbf{Connection to GDE.} This decomposition can also be expressed when we swap $\Ypred$ and $\Ytrue$:
\begin{align}
    \MoveEqLeft {
    \aCrossEntropy{\pof{\Ypred \given X}}{\pof{\Ytrue = \Ypred \given X}}}
    = \\
    &= {\axHof{\pof{\Ypred \given X}}} 
    - {\aKale{\pof{\Ypred \given X, Q}}{\pof{\Ypred \given X}}}
    + {\aKale{\pof{\Ypred \given X, Q}}{\pof{\Ytrue = \Ypred \given X}}}.
\end{align}
If we assume that $Q$ partitions $X$:
\begin{align}
    \pof{\ypred, \ytrue, x, q}=\pof{\ypred \given x} \, \pof{\ytrue \given x} \, \pof{x \given q} \, \pof{q},
\end{align}
we have $\axHof{\pof{\Ypred \given \Ytrue, Q}} = \axHof{\pof{\Ypred \given \Ytrue}}$, and the resolution term is 0:
\begin{align}
    {\aKale{\pof{\Ypred \given X, Q}}{\pof{\Ypred \given X}}} = 
    {\aKale{\pof{\Ypred \given X}}{\pof{\Ypred \given X}}} = 0.
\end{align}
For $\aICof{p}=1-x$, we also have $\aCrossEntropy{\pof{\Ypred \given X}}{\pof{\Ytrue = \Ypred \given X}} = \aCrossEntropy{\pof{\Ytrue \given X}}{\pof{\Ypred = \Ytrue \given X}} = 1 - \sum_y \pof{\Ypred = y \given x}{\pof{\Ytrue = y \given x}}$, and thus we have:
\begin{align}
    \MoveEqLeft \left \lvert {
    \aCrossEntropy{\pof{\Ypred \given X}}{\pof{\Ytrue = \Ypred \given X}} - {\axHof{\pof{\Ypred \given X}}}} \right \rvert = \left \lvert {\aKale{\pof{\Ypred \given X, Q}}{\pof{\Ytrue = \Ypred \given X}}} \right \rvert.
\end{align}
Setting $Q=\pof{\Ypred given X}$ as a random variable that takes values in the probability simplex, we can use the triangle inequality and 

\subsection{Other Relevant Literature}

\end{draft}

\section{Additional Proofs}
\label{app:additional_proofs}
\begin{lemma}
    \label{lemma:magic_tautology}
    For a model $\pof{\ypred \given x}$, we have for all $k \in [K]$ and $q \in [0, 1]$:
    \begin{align}
        \pof{\Ypred = \classindex \given \innerexpr{\pof{\Ypred = \classindex \given X}} = \levelthreshold} = \levelthreshold,
    \end{align}
    when the left-hand side is well-defined.
\end{lemma}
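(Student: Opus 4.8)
The plan is to read the identity as a tautological restatement of what the transformed random variable $\pof{\Ypred = \classindex \given X}$ \emph{is}, and to make this precise via the tower property. First I would fix the class $\classindex$ and introduce the shorthand $R \defeq \pof{\Ypred = \classindex \given X}$, emphasising that $R$ is a transformed random variable of $X$ alone; consequently the conditioning event $\{R = \levelthreshold\}$ is determined by $X$, which is the structural fact the whole argument rests on.

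Next I would unfold the left-hand side as a ratio $\frac{\pfor{\Ypred = \classindex,\, R = \levelthreshold}}{\pfor{R = \levelthreshold}}$, which is legitimate exactly when $\pfor{R = \levelthreshold} > 0$, i.e.\ ``when the left-hand side is well-defined.'' For the numerator I would condition on $X$: since $\indicator{R = \levelthreshold}$ is $X$-measurable, $\pfor{\Ypred = \classindex,\, R = \levelthreshold} = \E{X}{\indicator{R = \levelthreshold}\,\pof{\Ypred = \classindex \given X}} = \E{X}{\indicator{R = \levelthreshold}\,R}$, using $\pof{\Ypred = \classindex \given X} = R$ in the last step. The crux is then a one-line observation: on the event $\{R = \levelthreshold\}$ the factor $R$ equals the constant $\levelthreshold$, so it pulls out of the expectation, giving $\levelthreshold\,\E{X}{\indicator{R = \levelthreshold}} = \levelthreshold\,\pfor{R = \levelthreshold}$. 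Dividing by $\pfor{R = \levelthreshold}$ collapses the whole expression to $\levelthreshold$, as claimed.

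I expect the only real obstacle to be measure-theoretic rather than algebraic: for a continuous predictor the level set $\{R = \levelthreshold\}$ can have zero probability, so the ratio above is ill-defined pointwise and the clean cancellation no longer makes literal sense. The ``when well-defined'' hypothesis sidesteps this in the atomic case; in general the honest formulation is the almost-sure identity $\E{}{\indicator{\Ypred = \classindex} \given R} = R$, which simply says that $R$ is a version of the conditional probability of $\{\Ypred = \classindex\}$ given $X$ (equivalently, conditioning on $R$ itself: since $\sigma(R) \subseteq \sigma(X)$, the tower property gives $\E{}{\indicator{\Ypred = \classindex} \given R} = \E{}{R \given R} = R$). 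Phrased this way the lemma is genuinely self-referential --- it is nothing but the defining property of $R$ --- and no further computation is needed.
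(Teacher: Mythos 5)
Your proof is correct and follows essentially the same route as the paper's: introduce the transformed random variable $T_k = \pof{\Ypred = \classindex \given X}$ (your $R$), write the joint probability as $\E{X}{\indicator{T_k = \levelthreshold}\,\pof{\Ypred = \classindex \given X}}$ via the tower property, and observe that on the level set the factor equals the constant $\levelthreshold$ and pulls out. Your closing remark on the zero-measure level sets and the almost-sure formulation $\E{}{\indicator{\Ypred = \classindex} \given R} = R$ is a welcome sharpening the paper leaves implicit behind its ``when well-defined'' caveat, but it does not change the argument.
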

\begin{proof}
    This is equivalent to
    \begin{align}
        \pof{\Ypred = \classindex, \innerexpr{\pof{\Ypred = \classindex \given X}} = \levelthreshold} = \levelthreshold \, \pof{\innerexpr{\pof{\Ypred = \classindex \given X}} = \levelthreshold},
    \end{align}
    as the conditional probability is either defined or $\pof{\innerexpr{\pof{\Ypred = \classindex \given X}} = \levelthreshold} = 0$.
    Assume the former. Let $\pof{\innerexpr{\pof{\Ypred = \classindex \given X}} = \levelthreshold} > 0$.
    Introducing the auxiliary random variable $T_k \defeq \innerexpr{\pof{\Ypred = \classindex \given X}}$ as a transformed random variable of $X$, we have
    \begin{align}
        \pof{\Ypred = k, T_k = q} = q \, \pof{T_k = q}.
    \end{align}
    We can write the probability as an expectation over an indicator function
    \begin{align}
        \MoveEqLeft{} \pof{\Ypred = k, T_k = q} \\
        & = \E{X, \Ypred} {\indicator{\Ypred=k, T_k(X) = q}} \\
        & = \E{X, \Ypred} {\indicator{\Ypred=k} \, \indicator{T_k(X) = q}} \\
        & = \E{X}{\indicator{T_k(X) = q} \, \E{\Ypred}{\indicator{\Ypred = k} \given X }} \\
        &= \E{X}{\indicator{T_k(X) = q} \, \pof{\Ypred = k \given X}}.
    \intertext{
    Importantly, if $\indicator{T_k(x) = q} = 1$ for an $x$, we have $T_k(x) = \pof{\Ypred = k \given x} = q$, and otherwise, we multiply with $0$. Thus, this is equivalent to
    }
        & = \E{X}{\indicator{T_k(X) = q} \, q} \\
        & = q \E{X}{\indicator{T_k(X) = q}} \\
        & = q \pof{T_k(X) = q}.
    \end{align}
\end{proof}

\lemmaccnice*
\begin{proof}
    Beginning from 
    \begin{align}
        & \pof{\Ytrue = \classindex \given \innerexpr{\pof{\Ypred = \classindex \given X}} = \levelthreshold} = \levelthreshold,
    \end{align}
    we expand the conditional probability to
    \begin{align}
        \Leftrightarrow
            \pof{\Ytrue = \classindex, \innerexpr{\pof{\Ypred = \classindex \given X}} = \levelthreshold} = \levelthreshold \, \pof{\innerexpr{\pof{\Ypred = \classindex \given X}} = \levelthreshold},
    \end{align}   
    and substitute \cref{eq:magical_tautology} into the outer $q$, obtaining the first equivalence
    \begin{align}
        \Leftrightarrow \pof{\Ytrue = \classindex, \innerexpr{\pof{\Ypred = \classindex \given X}} = \levelthreshold} 
        =
        \pof{\Ypred = \classindex, \innerexpr{\pof{\Ypred = \classindex \given X}} = \levelthreshold}. \label{eq:cwc_nice}
    \end{align}
    For the second equivalence, we follow the same approach. 
    Beginning from
    \begin{align}
        \sum_\classindex \pof{\Ytrue = \classindex , \innerexpr{\pof{\Ypred=\classindex \given X}} = \levelthreshold}
        = \levelthreshold \sum_\classindex \pof{\innerexpr{\pof{\Ypred=\classindex \given X}} = \levelthreshold}
        ,
    \end{align}
    we pull the outer $q$ into the sum and expand using \eqref{eq:magical_tautology}
    \begin{align}
        \Leftrightarrow \sum_\classindex \pof{\Ytrue = \classindex , \innerexpr{\pof{\Ypred=\classindex \given X}} = \levelthreshold} 
         =
        \sum_\classindex \levelthreshold \pof{\innerexpr{\pof{\Ypred=\classindex \given X}} = \levelthreshold}
         =
        \sum_\classindex \pof{\Ypred = \classindex , \innerexpr{\pof{\Ypred=\classindex \given X}} = \levelthreshold}. 
        \label{eq:cac_intermediate}
    \end{align}
    In the inner expression, $k$ is tied to $\Ytrue$ on the left-hand side and $\Ypred$ on the right-hand side, so we have
    \begin{align}
        \Leftrightarrow \sum_\classindex \pof{\Ytrue = \classindex , \innerexpr{\pof{\Ypred=\Ytrue \given X}} = \levelthreshold} 
        =
        \sum_\classindex \pof{\Ypred = \classindex , \innerexpr{\pof{\Ypred\given X}} = \levelthreshold}. 
    \end{align}
    Summing over $k$, marginalizes out $\Ytrue = k$ and $\Ypred = k$ respectively, yielding the second equivalence
    \begin{align}
        \Leftrightarrow \pof{\innerexpr{\pof{\Ypred=\Ytrue \given X}} = \levelthreshold} 
        = 
        \pof{\innerexpr{\pof{\Ypred \given X}} = \levelthreshold}. 
    \end{align}
    Finally, class-wise calibration implies class-aggregated calibration as summing over different $k$ in \eqref{eq:cwc_nice}, which is equivalent to class-wise calibration, yields \eqref{eq:cac_intermediate}, which is equivalent to class-aggregated calibration. 
\end{proof}
}

\approxmivarsum*
\begin{proof}
    We show that both sides are equal:
    \begin{align}
        \MoveEqLeft{}
        \aMIof{\Ypred ; \W \given x} = \aHof{\Ypred \given x} - \aHof{\Ypred \given x, \W} \\
        &= 
        \E{\Ypred}{1-\pof{\Ypred \given x}} 
        - 
        \E{\Ypred, \W}{1-\pof{\Ypred \given x, \W}}\\
        &= 
        \E{\Ypred, \W}{\pof{\Ypred \given x, \W}}
        - 
        \E{\Ypred}{\pof{\Ypred \given x}}\\
        &= \simpleE{\W} \E{\pof{\ypred, x, \W}}{\pof{\ypred \given x, \W}} 
        - 
        \simpleE{\pof{\ypred \given x}}{\pof{\ypred \given x}}\\
        &=  \E*{\W}{\sum_{\ypred=1}^K \pof{\ypred \given x, \W}^2} 
        -
        \sum_{\ypred=1}^K \E{\W}{\pof{\ypred \given x, \W}}^2 \\
        &=  \sum_{\ypred=1}^K \E*{\W}{\pof{\ypred \given x, \W}^2} -
        \E{\W}{\pof{\ypred \given x, \W}}^2 \\
        &= \sum_{\ypred=1}^K \Var{\W}{\pof{\ypred \given x, \W}} \\
        &\ge 0,
    \end{align}
    where we have used that $\simpleE{\pof{\ypred \given x}}{\pof{\ypred \given x}} = \sum_{\ypred=1}^K \pof{\ypred \given x}^2$.
\end{proof}

\section{Experimental Validation of Calibration Deterioration under Increasing Disagreement}
\label{appsec:experiments}

Here, we discuss additional details to allow for reproduction and present results on additional datasets. In addition to the experiments on CIFAR-10 \citep{krizhevsky2009learning} and CINIC-10 \citep{darlow2018cinic}, we report results for ImageNet \citep{deng2009imagenet} (in-distribution) using an ensemble of pretrained models and PACS \citep{li2017deeper} (distribution shift) where we fine-tune ImageNet models on PACS' `photo' domain, which is close to ImageNet as source domain, and evaluate it on PACS' `art painting', `sketch', and `cartoon' domains. We use all three domains together for distribution shift evaluation to have more samples for the rejection plots.

\subsection{Experiment Setup}
\label{appsec:experiment_setup}

We use PyTorch \citep{NEURIPS2019_9015} for all experiments.

\textbf{CIFAR-10 and CINIC-10.} We follow the training setup from \citet{mukhoti2021deterministic}: we train 25 WideResNet-28-10 models \citep{zagoruyko2016wide} for 350 epochs on CIFAR-10. We use SGD with a learning rate of $0.1$ and momentum of $0.9$. We use a learning rate schedule with a decay of 10 at 150 and 250 epochs.

\textbf{ImageNet and PACS.} We use pretrained models with various architectures (specifically: ResNet-152-D \citep{he2018bag}, BEiT-L/16 \citep{bao2021beit}, ConvNext-L \citep{liu2022}, DeiT3-L/16 \citep{touvron2020training}, and ViT-B/16 \citep{dosovitskiy2020image}) from the timm package \citep{rw2019timm} as base models. We freeze all weights except for the final linear layer, which we fine-tune on PACS' `photo` domain using Adam \citep{kingma2014adam} with learning rate $5\times10^{-3}$ and batch size 128 for 1000 steps. We then build an ensemble using these different models.

\subsection{Additional Results}
\label{appsec:additional_results}

\begin{figure}
    \centering
    \begin{subfigure}{\linewidth}
        \vspace{-0.4em}
        \centering
        \includegraphics[width=\linewidth,trim={5cm 0 0 11.5cm},clip]{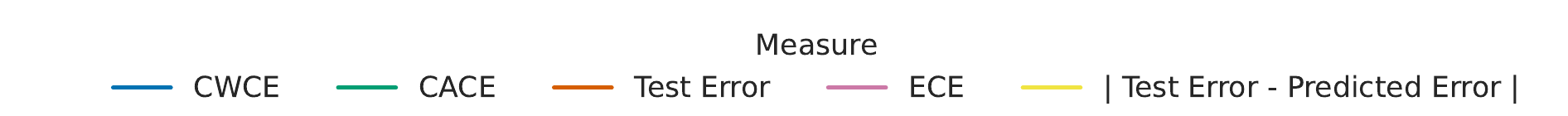} 
    \end{subfigure}
    \begin{subfigure}{\linewidth}
        \vspace{0.1em}
        \caption*{\textbf{ImageNet (In-Distribution)}}
        \vspace{0.1em}
        \begin{subfigure}{0.52\linewidth}
            \centering
            \includegraphics[width=\linewidth,trim={0 1.5cm 11.5cm 0},clip]{plots/imagenet_rejection_plot_disrate.pdf}
        \end{subfigure}
        \begin{subfigure}{0.46\linewidth}
            \centering
            \includegraphics[width=\linewidth,trim={13cm 1.5cm 0 0},clip]{plots/imagenet_rejection_plot_disrate.pdf}
        \end{subfigure}
    \end{subfigure}
    \begin{subfigure}{\linewidth}
        \vspace{0.1em}
        \caption*{\textbf{PACS (Distribution Shift)}}
        \vspace{0.1em}
        \begin{subfigure}{0.52\linewidth}
            \centering
            \includegraphics[width=\linewidth,trim={0 1.5cm 11.5cm 0},clip]{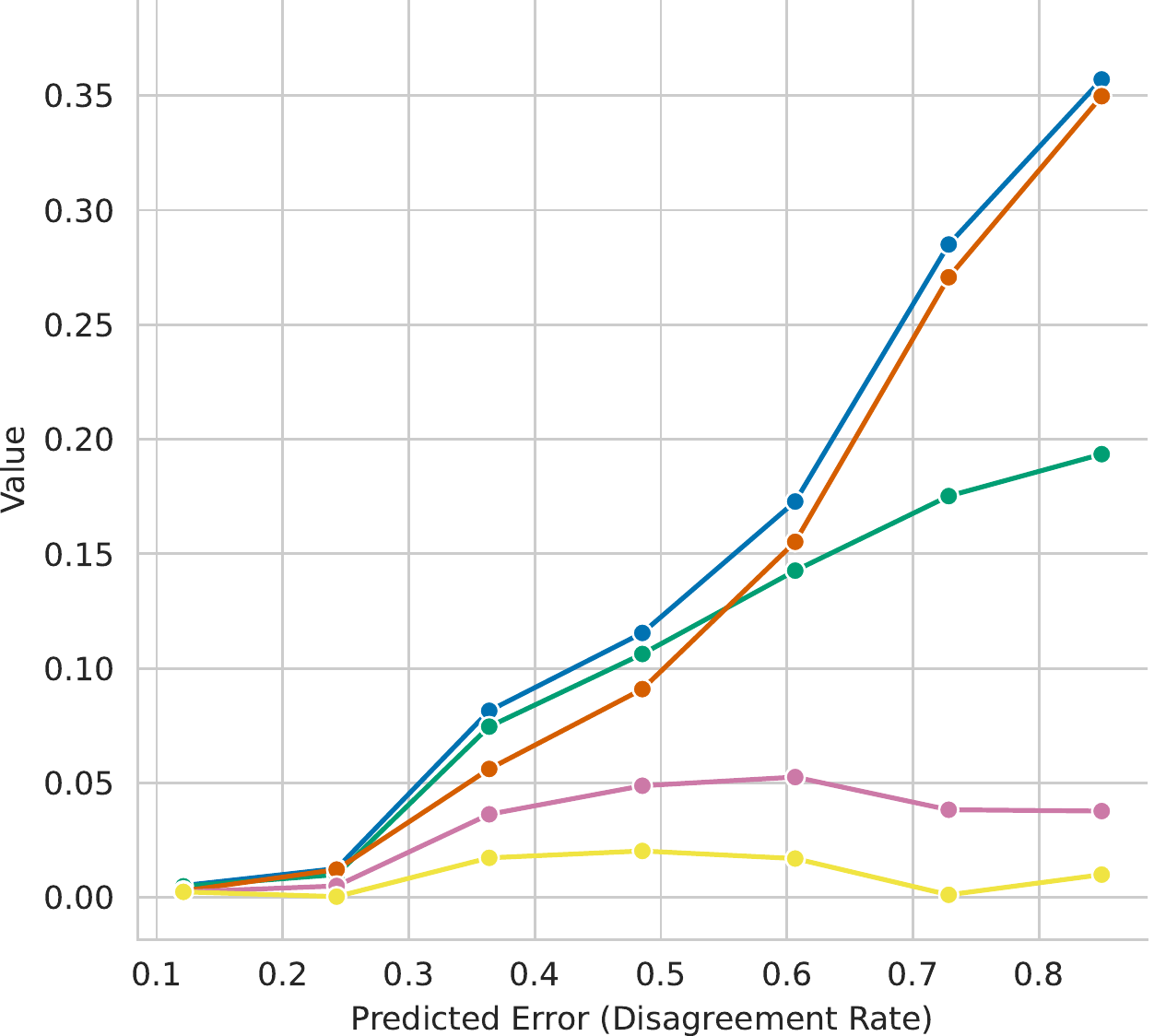}
            \subcaption{Ensemble with TOP}    
        \end{subfigure}
        \begin{subfigure}{0.46\linewidth}
            \centering
            \includegraphics[width=\linewidth,trim={13cm 1.5cm 0 0},clip]{plots/pacs_cartoon_sketch_rejection_plot_disrate.pdf}
            \subcaption{Ensemble without TOP (same underlying models)}    
        \end{subfigure}
    \end{subfigure}
    \caption{\emph{Rejection Plot of Calibration Metrics for Increasing Disagreement In-Distribution (ImageNet) and Under Distribution Shift (PACS `photo` domain $\rightarrow$ other domains).} Different calibration metrics ($\ECE$, $\CWCE$, $\CACE$) vary across ImageNet and PACS' `art painting', `cartoon', and `sketch' domains across an ensemble of 5 models trained on ImageNet and 25 models fine-tuned on PACS' `photo' domain, depending on the rejection threshold of the predicted error (disagreement rate). Again, calibration cannot be assumed constant for in-distribution data or under distribution shift. The mean predicted error (disagreement rate) is shown on the x-axis. \textbf{(a)} shows results for an ensemble using TOP (following \citet{jiang2021assessing}), and \textbf{(b)} for a regular deep ensemble without TOP. Details in \S\ref{appsec:additional_results}.
    }
    \label{appfig:rejection_plot_imagenet_pacs}
\end{figure}

In \Cref{appfig:rejection_plot_imagenet_pacs}, we see that for ImageNet and PACS, the calibration metrics behave like for CIFAR-10 and CINIC-10, matching the described behavior in the main text. We use 5 models from each of the enumerated architectures to build an ensemble of 25 models. Individual architectures also behave as expected as we ablate in \Cref{appfig:rejection_plot_imagenet_pacs_archs}.

Additionally, in \Cref{appfig:rejection_plot_infogain} and \Cref{appfig:rejection_plot_imagenet_pacs_infogain}, we also show rejection plots using the Expected Information Gain/BALD for thresholding. We observe similar trajectories. Comparing these results with \Cref{fig:rejection_plot} and \Cref{appfig:rejection_plot_imagenet_pacs}, we see that both the predicted error and the Bayesian metric behave similarly. We hypothesize that this could be because the datasets only contain few samples with high aleatoric uncertainty (e.g.\ noise), which would otherwise act as confounder \citep{mukhoti2021deterministic}. See also the discussion in \S\ref{sec:discussion}.

\begin{figure}
    \centering
    \begin{subfigure}{\linewidth}
        \centering
        \includegraphics[width=\linewidth,trim={5cm 0 0 11.5cm},clip]{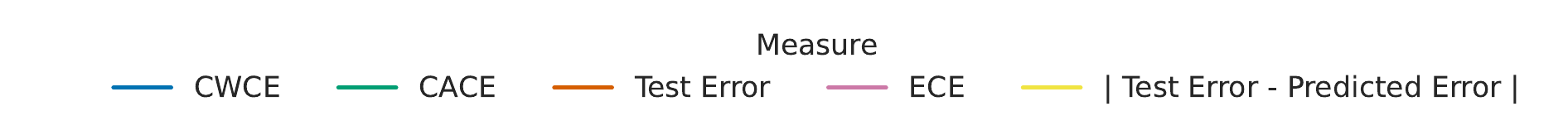} 
    \end{subfigure}
    \begin{subfigure}{\linewidth}
        \vspace{0.1em}
        \caption*{\textbf{CIFAR-10 (In-distribution)}}
        \vspace{0.1em}
        \begin{subfigure}{0.52\linewidth}
            \centering
            \includegraphics[width=\linewidth,trim={0 1.5cm 11.5cm 0},clip]{plots/cifar10_test_rejection_plot_infogain.pdf}
        \end{subfigure}
        \begin{subfigure}{0.46\linewidth}
            \centering
            \includegraphics[width=\linewidth,trim={13cm 1.5cm 0 0},clip]{plots/cifar10_test_rejection_plot_infogain.pdf}
        \end{subfigure}
    \end{subfigure}
    \begin{subfigure}{\linewidth}
        \vspace{0.1em}
        \caption*{\textbf{CINIC-10 (Distribution Shift)}}
        \vspace{0.1em}
        \begin{subfigure}{0.52\linewidth}
            \centering
            \includegraphics[width=\linewidth,trim={0 1.5cm 11.5cm 0},clip]{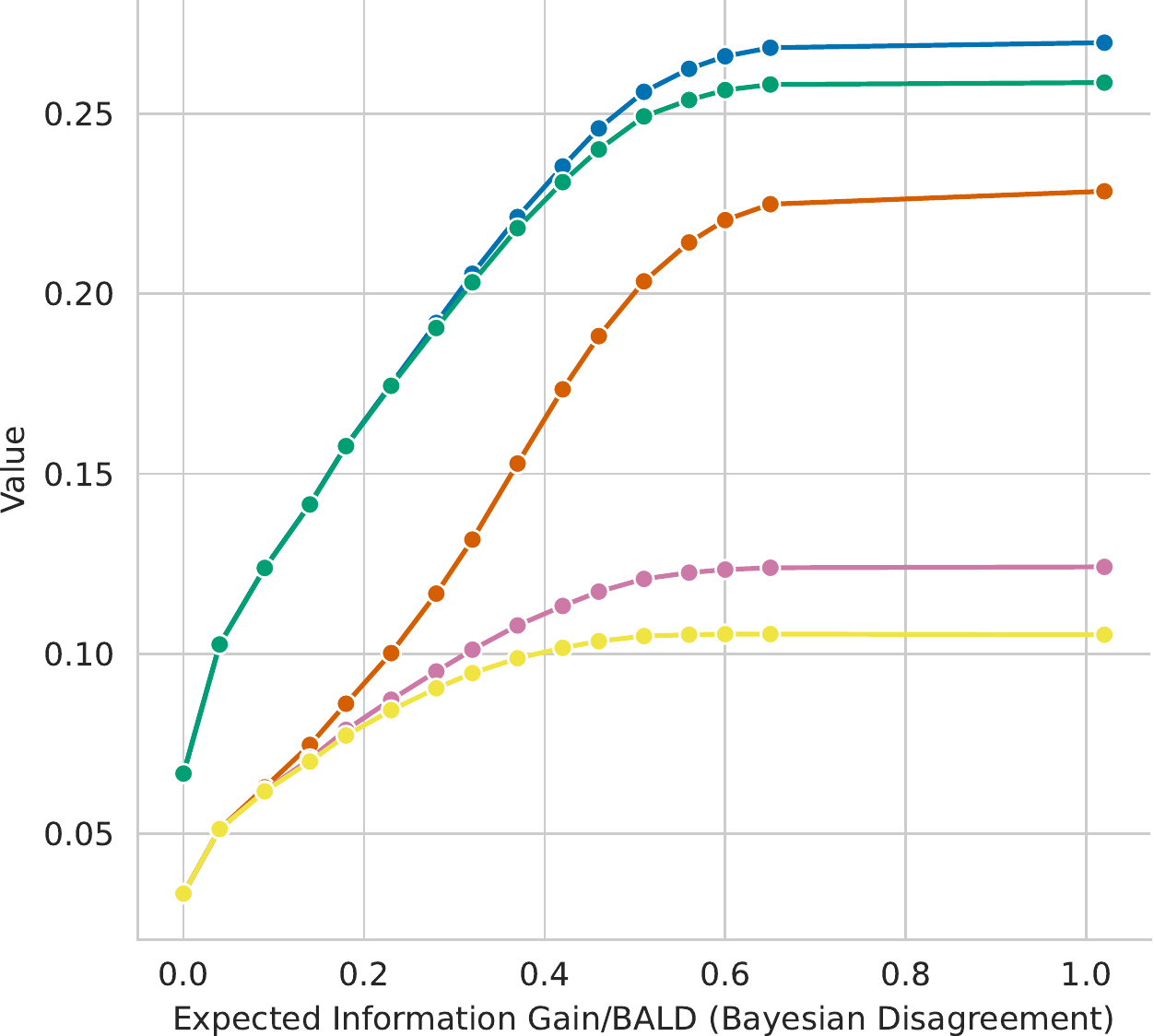}
            \subcaption{Ensemble with TOP}    
        \end{subfigure}
        \begin{subfigure}{0.46\linewidth}
            \centering
            \includegraphics[width=\linewidth,trim={13cm 1.5cm 0 0},clip]{plots/cinic10_test_rejection_plot_infogain.pdf}
            \subcaption{Ensemble without TOP (same underlying models)}    
        \end{subfigure}
    \end{subfigure}
    \caption{\emph{Rejection Plot of Calibration Metrics for Increasing Bayesian Disagreement In-Distribution (CIFAR-10) and Under Distribution Shift (CINIC-10).} Different calibration metrics ($\ECE$, $\CWCE$, $\CACE$) vary across CIFAR-10 and CINIC-10, depending on the rejection threshold of Bayesian disagreement (Expected Information Gain/BALD). The trajectory matches the one for prediction disagreement. We hypothesize this is because there are few noisy samples in the dataset which would act as a confounder for prediction disagreement otherwise. 
    Details in \S\ref{appsec:additional_results}.
    }
    \label{appfig:rejection_plot_infogain}
\end{figure}

\begin{figure}
    \centering
    \begin{subfigure}{\linewidth}
        \centering
        \includegraphics[width=\linewidth,trim={5cm 0 0 11.5cm},clip]{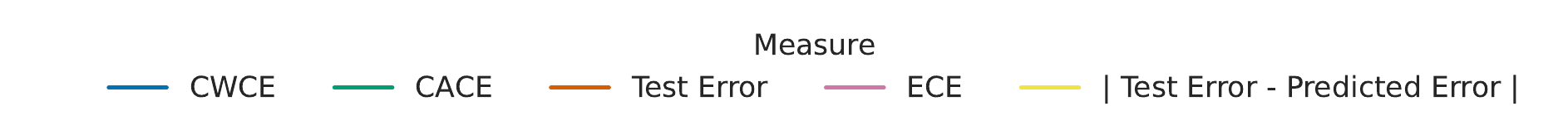} 
    \end{subfigure}
    \begin{subfigure}{\linewidth}
        \vspace{0.1em}
        \caption*{\textbf{ImageNet (In-distribution)}}
        \vspace{0.1em}
        \begin{subfigure}{0.52\linewidth}
            \centering
            \includegraphics[width=\linewidth,trim={0 1.5cm 11.5cm 0},clip]{plots/imagenet_rejection_plot_infogain.pdf}
        \end{subfigure}
        \begin{subfigure}{0.46\linewidth}
            \centering
            \includegraphics[width=\linewidth,trim={13cm 1.5cm 0 0},clip]{plots/imagenet_rejection_plot_infogain.pdf}
        \end{subfigure}
    \end{subfigure}
    \begin{subfigure}{\linewidth}
        \vspace{0.1em}
        \caption*{\textbf{PACS (Distribution Shift)}}
        \vspace{0.1em}
        \begin{subfigure}{0.52\linewidth}
            \centering
            \includegraphics[width=\linewidth,trim={0 1.5cm 11.5cm 0},clip]{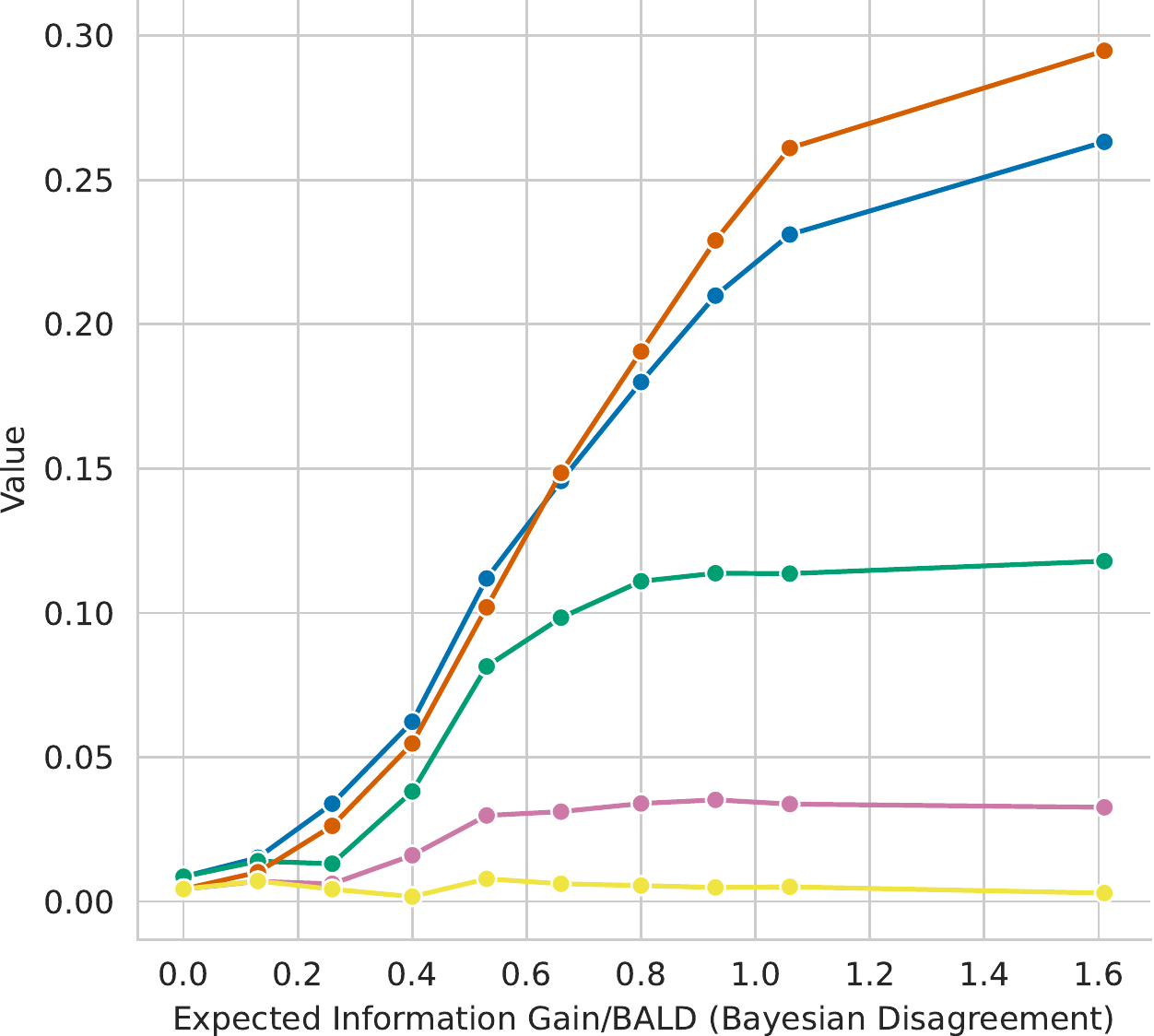}
            \subcaption{Ensemble with TOP}    
        \end{subfigure}
        \begin{subfigure}{0.46\linewidth}
            \centering
            \includegraphics[width=\linewidth,trim={13cm 1.5cm 0 0},clip]{plots/pacs_cartoon_sketch_rejection_plot_infogain.pdf}
            \subcaption{Ensemble without TOP (same underlying models)}    
        \end{subfigure}
    \end{subfigure}
    \caption{\emph{Rejection Plot of Calibration Metrics for Increasing Bayesian Disagreement In-Distribution (CIFAR-10) and Under Distribution Shift (CINIC-10).} Different calibration metrics ($\ECE$, $\CWCE$, $\CACE$) vary across CIFAR-10 and CINIC-10, depending on the rejection threshold of Bayesian disagreement (Expected Information Gain/BALD). The trajectory matches the one for prediction disagreement. We hypothesize this is because there are few noisy samples in the dataset which would act as a confounder for prediction disagreement otherwise. 
    Details in \S\ref{appsec:additional_results}.
    }
    \label{appfig:rejection_plot_imagenet_pacs_infogain}
\end{figure}

\begin{figure}
    \centering
    \begin{subfigure}{\linewidth}
        \centering
        \includegraphics[width=\linewidth,trim={5cm 0 0 11.5cm},clip]{plots/imagenet_rejection_plot_disrate.pdf} 
    \end{subfigure}
    \begin{subfigure}{\linewidth}
        \vspace{0.1em}
        \caption*{\textbf{PACS, BeiT-L/16}}
        \vspace{0.1em}
        \begin{subfigure}{0.52\linewidth}
            \centering
            \includegraphics[width=\linewidth,trim={0 1.5cm 11.5cm 0},clip]{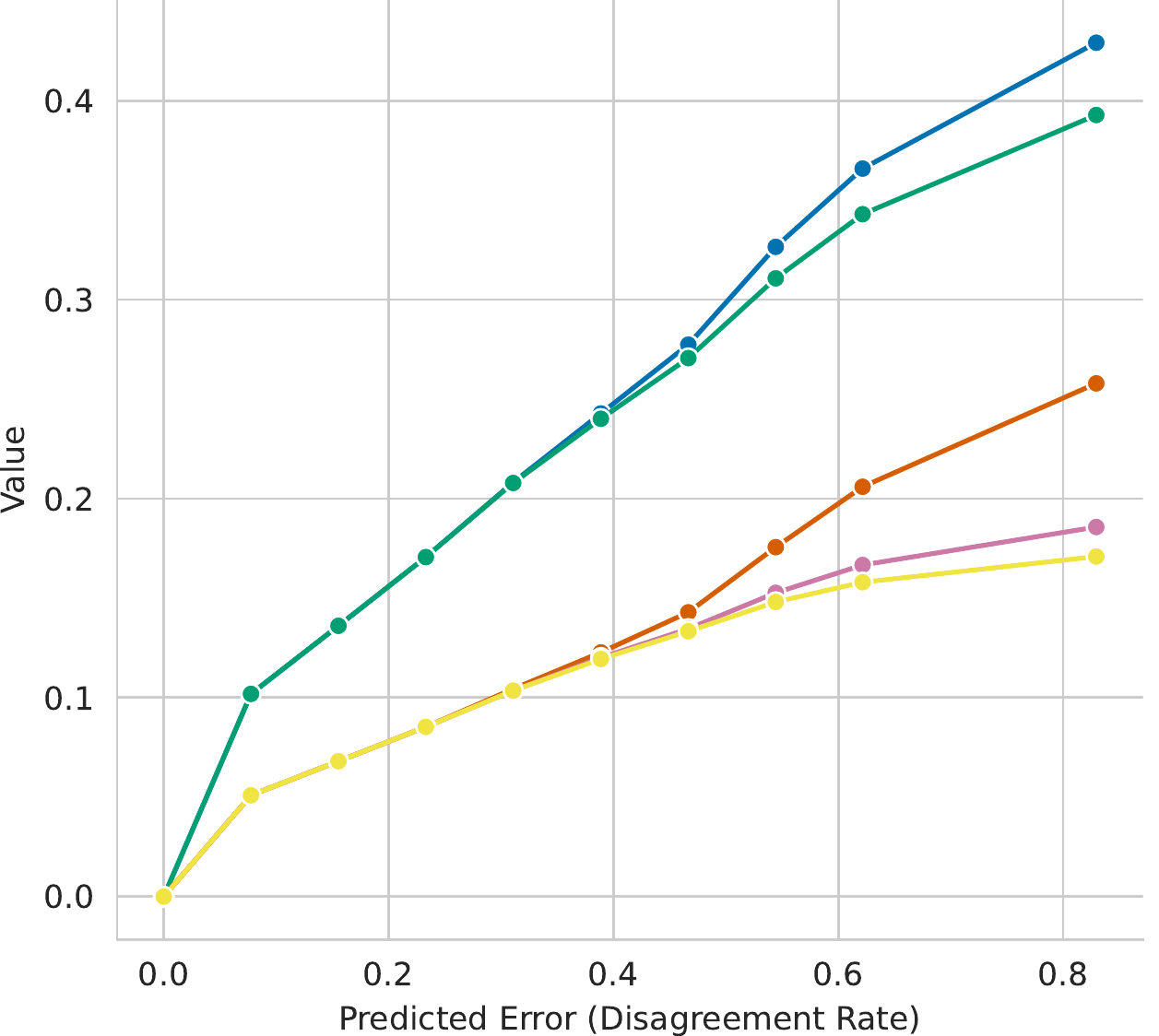}
        \end{subfigure}
        \begin{subfigure}{0.46\linewidth}
            \centering
            \includegraphics[width=\linewidth,trim={13cm 1.5cm 0 0},clip]{plots/pacs_cartoon_sketch_beit_large_patch16_224_rejection_plot_disrate.pdf}
        \end{subfigure}
    \end{subfigure}
    \begin{subfigure}{\linewidth}
        \vspace{0.1em}
        \caption*{\textbf{PACS, ResNet-152-D}}
        \vspace{0.1em}
        \begin{subfigure}{0.52\linewidth}
            \centering
            \includegraphics[width=\linewidth,trim={0 1.5cm 11.5cm 0},clip]{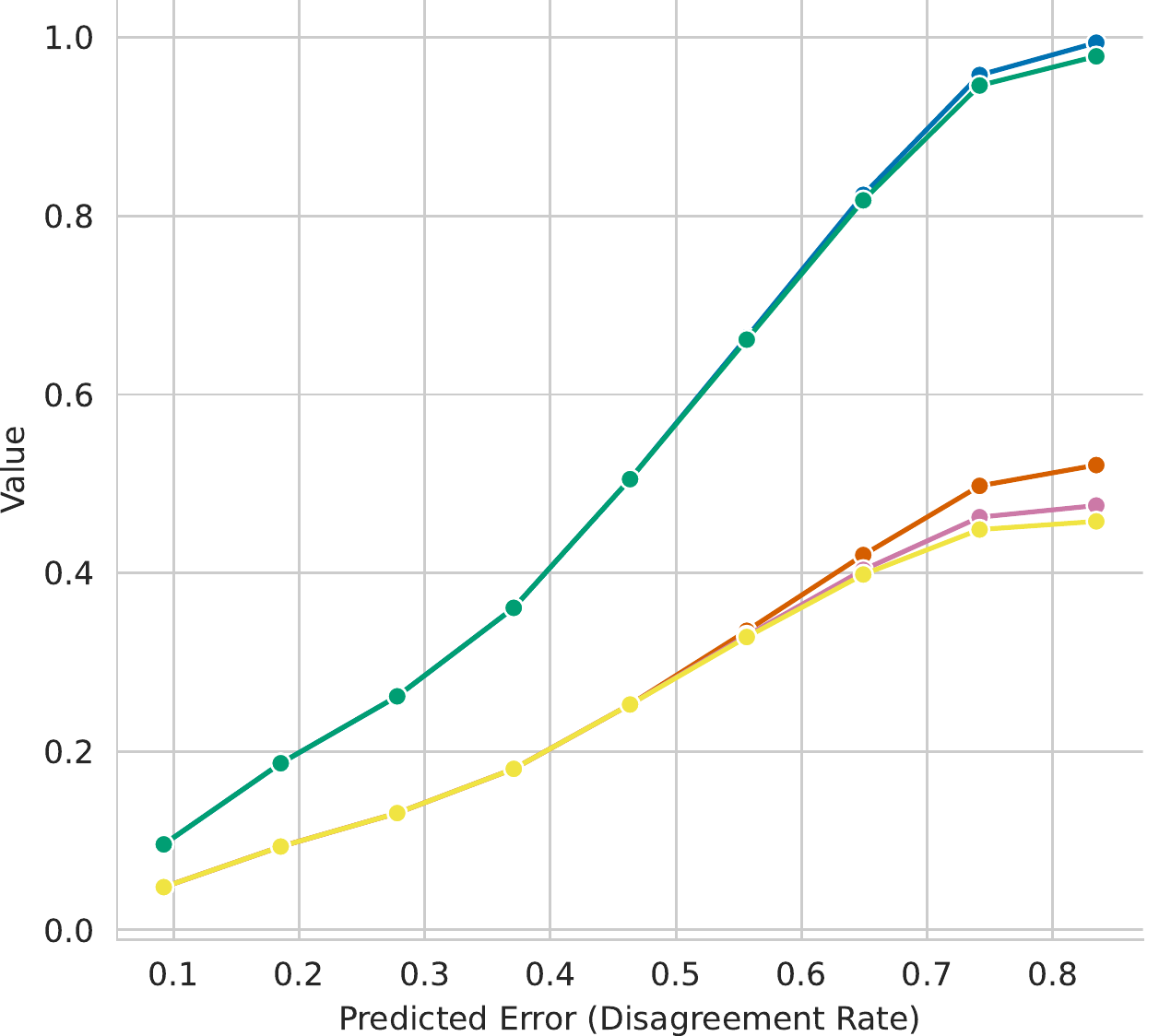}
            \subcaption{Ensemble with TOP}    
        \end{subfigure}
        \begin{subfigure}{0.46\linewidth}
            \centering
            \includegraphics[width=\linewidth,trim={13cm 1.5cm 0 0},clip]{plots/pacs_cartoon_sketch_resnet152d_rejection_plot_disrate.pdf}
            \subcaption{Ensemble without TOP (same underlying models)}    
        \end{subfigure}
    \end{subfigure}
    \caption{\emph{Rejection Plot of Calibration Metrics for Increasing Disagreement Under Distribution Shift (PACS `photo` domain $\rightarrow$ other domains) for Specific Model Architectures.} We use the same encoder weights and evaluate on an ensemble of 5 models, which were last-layer fine-tuned on PACS. We show ResNet-152-D and BeiT-L/16. Details in \S\ref{appsec:additional_results}.
    }
    \label{appfig:rejection_plot_imagenet_pacs_archs}
\end{figure}

\end{document}